\DeclareMathOperator*{\argmin}{arg\,min}
\DeclareMathOperator*{\argmax}{arg\,max}
\newtheorem{theorem}{Theorem}[section]
\newtheorem{proposition}[theorem]{Proposition}
\newtheorem{lemma}[theorem]{Lemma}
\newenvironment{proof}{\paragraph{Proof:}}{\hfill$\square$}
\newcommand{\xmark}{\ding{55}}
\title{Neuro-mimetic Task-free Unsupervised Online Learning with Continual Self-Organizing Maps}
\author{
 Hitesh Vaidya \\
  Department of Computer Science and Engineering\\
  University of South Florida\\
  Tampa, FL 33620 \\
  \texttt{hvaidya@usf.edu} \\
  \And
  Travis Desell \\
  Department of Software Engineering\\
  Rochester Institute of Technology\\
  Rochester, NY 14623 \\
  \texttt{tjdvse@rit.edu} \\
   \And
 Ankur Mali \\
  Department of Computer Science and Engineering\\
  University of South Florida\\
  Tampa, FL 33620 \\
  \texttt{ankurarjunmali@usf.edu} \\
  \And
 Alexander Ororbia \\
  Department of Computer Science \\
  Rochester Institute of Technology\\
  Rochester, NY 14623 \\
  \texttt{agovcs@rit.edu} \\
}
\begin{document}
\setlength{\abovedisplayskip}{0.065cm}
\setlength{\belowdisplayskip}{0pt}

\maketitle

\begin{abstract}
An intelligent system capable of continual learning is one that can process and extract knowledge from potentially infinitely long streams of pattern vectors. The major challenge that makes crafting such a system difficult is known as \emph{catastrophic forgetting} -- an agent, such as one based on artificial neural networks (ANNs), struggles to retain previously acquired knowledge when learning from new samples. Furthermore, ensuring that knowledge is preserved for previous tasks becomes more challenging when input is not supplemented with task boundary information. Although forgetting in the context of ANNs has been studied extensively, there still exists far less work investigating it in terms of unsupervised architectures such as the venerable self-organizing map (SOM), a neural model often used in clustering and dimensionality reduction. While the internal mechanisms of SOMs could, in principle, yield sparse representations that improve memory retention, we observe that, when a fixed-size SOM processes continuous data streams, it experiences concept drift. In light of this, we propose a generalization of the SOM, the continual SOM (CSOM), which is capable of online unsupervised learning under a low memory budget. Our results, on benchmarks including MNIST, Kuzushiji-MNIST, and Fashion-MNIST, show almost a two times increase in accuracy, and CIFAR-10 demonstrates a state-of-the-art result when tested on (online) unsupervised class incremental learning setting.
\end{abstract}

\section{Introduction}
\label{sec:intro}

A major hurdle in designing autonomous, continuously learning agents is ensuring that such systems effectively preserve previously acquired knowledge when faced with new information. The difficulty that agents face in retaining information acquired over time is known as catastrophic forgetting (or interference) \cite{mccloskey_catastrophic_1989,ratcliff_connectionist_1990,mccloskey_catastrophic_1989} and is a significant challenge in the problem setting known as lifelong or continual learning \cite{thrun1995lifelong,chen2016lifelong,ClassIncrementalSurvey,ororbia2020continual, ororbia2022lifelong}. In continual learning, an agent must embed knowledge learned from data in a sequential manner without compromising prior knowledge. Much as humans do, this agent should process samples from these sources online, consolidating and transferring the knowledge acquired over time without forgetting previously learned tasks.

Notably, catastrophic interference has been investigated with respect to deep neural networks (DNNs), which are often trained to solve supervised prediction tasks, especially in efforts such as \cite{kirkpatrick2017overcoming,lopez2017gradient,generativeReplay,ororbia2020continual, ororbia2022lifelong}. However, a relative dearth of work exists concerning less mainstream neural systems, particularly those that conduct unsupervised learning such as self-organizing maps (SOMs) \cite{kohonen1982self}. In this work, we seek to rectify this gap by, first, studying the degree to which forgetting occurs in a classical model such as the SOM and, second, developing a generalization of this system, which we call the Continual SOM or Continual Kohonen map (\emph{CSOM}), which is robust to the interference encountered in the context of online continual learning.

In essence, SOMs are a type of brain-inspired (or neuro-mimetic) unsupervised neural system where its neuronal units compete for the right to activate in the presence of particular input patterns, and the synaptic parameters associated with the winning unit(s) are adjusted via a form of Hebbian learning  \cite{hebb1949organization,martinetz1993competitive,ororbia2023brain}. Notably, the SOM's units are often arranged in either a spatial format, e.g., in a Cartesian plane/grid, or in a topological fashion, e.g., in a neighborhood/field based on the Euclidean distance between the activation values of units themselves. A useful property of the SOM is that its topologically-arranged neuronal units effectively learn to construct a low-dimensional ``semantic'' map of the more complicated sensory input space, where similar data patterns are grouped more closely together (around particular ``prototypes'') and farther apart from more dissimilar ones. This makes SOMs quite useful for clustering and dimensionality reduction tasks \cite{vesanto2000clustering,baccao2005self,bigdeli2022application}.

At first glance, it would appear that a competitive learning model such as the SOM might offer a natural immunity to forgetting since any particular neuronal unit tends to activate more often than others (by virtue of the distance function) in response to similar sensory input patterns throughout the learning process. This specialization would, in principle, result in non-overlapping, sparse representations, which have been argued to be a key way for reducing neural cross-talk  \cite{srivastava2013compete,ororbia2021continual}, a source of forgetting  \cite{mccloskey_catastrophic_1989,ratcliff_connectionist_1990,french1999catastrophic}. Furthermore, some approaches, such as \cite{gepperth2015bioincrement}, have advocated for using SOMs in the continual learning setting. Nevertheless, as uncovered in the experiments of this paper, the SOM, in its purest form, appears to be prone to forgetting, thus motivating our particular model generalization and computational framework \cite{frontiersForgetting}. 

In service of the problem of online continual learning, the core contributions of this paper can be summarized as:
\begin{itemize}
    \item We adapt the classical SOM model, which was initially formulated for fitting to single static datasets, to the online continual learning setting; specifically, we study its memory retention in tandem with its ability to adapt to pattern streams. Our experiments show that the SOM experiences substantial interference across all benchmarks examined.
    \item We develop a generalization of the SOM, termed the CSOM, which experiences significantly less forgetting by introducing mechanisms such as specialized decay functionality and running variance to select the best matching unit (BMU) for an input at a given time step.
    \item We present experimental results for four class-incremental datasets, i.e., MNIST, Fashion MNIST, Kuzushiji-MNIST, and CIFAR-10, and empirically demonstrate the robustness of the proposed CSOM to forgetting, yielding a promising, unsupervised neuromimetic system for the lifelong learning setting.
\end{itemize}

\section{Related Work}
\label{sec:lit_review}

In this section, we review prior work related to the central topics that drive this paper: general principles of competitive learning, self-organizing maps, and continual unsupervised learning. 


\noindent
\textbf{Competitive Learning.} In competitive learning \cite{Hartono2012}, neurons compete with each other to best match with encountered input sample patterns. This form of neural computation is often built on top of Hebbian learning \cite{hebb1949organization,Choe2013,ororbia2023brain} (where an adjustment made to a synapse depends on only local information that is available to it both spatially and temporally)  and is typically used to perform clustering on input data. Vector quantization \cite{gray1984vector} and self-organizing maps (Kohonen maps) \cite{kohonen1982self} are prominent examples within this family of models. Unlike modern-day DNNs, where all internal neurons participate in every step of inference and learning, in competitive learning, only the neurons that satisfy certain criteria ``win'' the right to compute \cite{srivastava2013compete} and update their connection weight strengths. This form of learning can be useful in identifying and extracting useful features within a dataset. According to \cite{rumelhart1986parallel,ororbia2023brain}, three fundamental elements define competitive learning in general: \textbf{1}) the model starts with a set of units that are highly similar, except for some random noise, which makes each of them respond slightly differently to a set of inputs; 
\textbf{2}) there is a limit to the ``strength'' of each unit (which motivates the notion of neighborhood functions, as discussed later); and 
\textbf{3}) the units are allowed to compete in some way for the right to respond to a particular subset of inputs.

\noindent
\textbf{Self-Organizing Maps.} The self-organizing map (SOM) \cite{kohonen1982self}, and its many variants \cite{khacef2020improving,rougier2021randomized,kopczynski2021non,gliozzi2018self}, is an unsupervised clustering neural model that adjusts its connection strengths via a Hebbian update rule \cite{hebb1949organization}. During training, spatially arranged clusters gradually form around the best-matching neurons within the SOM. This allows the SOM to be useful as a data exploration tool and even as an effective minimally supervised learner \cite{lyu2024minimally}, wherein its internal units represent a summary of the latent patterns found within a dataset. In contrast to clustering algorithms, such as K-means \cite{forgy1965cluster,lloyd1982least}, SOMs perform a soft clustering over inputs, which means that the connection weight update for the best matching neuron is the strongest while the updates made to the others decay/fade gradually as one moves farther away to other neurons within the best matching neuron's neighborhood. 

SOMs, in the view of this study, make potentially invaluable memory systems of sensory input patterns given that they iteratively compress their information into compact parameter vectors (or neural ``templates'') as opposed to the raw data buffers \cite{DERverse, MemoryEvolution} generally used for storing clusters or exemplars of specific task datasets, much like those often used in many continual learning methods \cite{8793982,lopez2017gradient}. Furthermore, as has been shown in prior work \cite{bashivan2019continual,pinitas2021dendritic}, the SOM's learning process is better equipped to facilitate the capture of the subtle differences (or variance) across a task's constituent data patterns.

\noindent
\textbf{Continual Unsupervised Learning.} 
One of the applications of continual learning is to design agents capable of performing intelligent operations on low-resource or edge-computing devices, such as those generally found in self-driving cars \cite{liu2019edge} or robotic control systems. Like many other real-world sources, self-driving cars generate enormous quantities of data that are often unlabeled or unannotated. Therefore, the value of unsupervised learning is relatively high for these kinds of applications. A consequence is that effort will be required to obtain optimized solutions to tackle the problem of catastrophic forgetting in unsupervised learning. 

Crafting probabilistic generative models \cite{generativeReplay, SelfSupervisedTeacherStudent} that are capable of synthesizing data samples, which can be used to refresh the memory of task-specialized neural models (e.g., a classifier/regressor), is one prevalent approach. \cite{ayub2021eec} employed (neural) encoder models to store the centroids of task data, which were later used to generate data from previous tasks in order to induce replay. However, this approach is not computationally feasible for large quantities of data -- storing multiple encoder models per task results in an increase in memory complexity as well as incurs additional computational time needed to train each additional new encoder network for every newly encountered task. This greatly hinders the scalability of such an approach. In general, rehearsal-based approaches \cite{8100070,Isele2018SelectiveER}, of which some schemes could technically be considered to fall under the umbrella of unsupervised continual learning, often store data samples in some form of explicit memory. However, low-resource devices often do not have a large memory capacity to store (enough) samples that adequately capture the variance of every task's distribution; therefore, this reduces such schemes' effectiveness given that explicitly storing more data to facilitate effective (memory-based) refreshing/retraining of more complex neural predictor model is not feasible \cite{VariationalDistillationCL}. 

In this work, we will show that our proposed CSOM can readily and usefully capture the variance inherent to each task's dataset (within a sequence) and yet not need constant, expensive retraining or refreshing itself. Finally, although out of scope for this paper, we remark that the SOM models we study could also be made to expand much as in (growing, \cite{AdaptiveProgressiveCL, SelfSupervisedTeacherStudent, SupportNetwork}) dynamical neural gas models \cite{10.5555/2998687.2998765,VENTOCILLA2021100254}

\section{Methodology} 
\label{sec:method}

\noindent
\textbf{Notation.} We start by defining the notation that will be used throughout this paper. $\odot$ indicates a Hadamard product, $\cdot$ indicates a matrix/vector multiplication (or dot product if the two objects it is applied to are vectors of the same shape). $||\mathbf{v}||_p$ is used to represent the $p$-norm (distance function w.r.t. the difference between two vectors), i.e., $p = 2$ selects the $2$-norm or Euclidean distance. 
$\mathbf{W}[:,i]$ is the slice operator, meant to extract the $i$th column vector of matrix $\mathbf{W}$; $\mathbf{W}[i,:]$ is meant to extract the $i$th row. $\cos(\theta)$ indicates cosine similarity.

\noindent
\textbf{Problem Definition.} Consider a sequence of $T$ tasks, which we formally denote by $\mathcal{S} = \bigcup_{k=1}^T \mathcal{T}_k$. Each task $\mathcal{T}_k$ has a training dataset (each containing $C$ classes), $\mathcal{D}_{train}^{(k)} = \bigcup_{i=1}^{N_k} \{(\mathbf{x}_i^{(k)}, \mathbf{y}_i^{(k)})\}$, where $\mathbf{x}_i^{(k)} \in \mathcal{R}^{D \times 1}$ is a data pattern and  $\mathbf{y}_i^{(k)} \in \{0,1\}^{C \times 1}$ is its label vector. Furthermore, we let $N_k$ be the number of patterns in task $T_k$ and $\mathcal{D}_{test}^{(k)}$ to represent the test dataset for task $\mathcal{T}_k$.  We remark that, although the datasets investigated in this work come with labels, our models will never use them since they are unsupervised. However, we will use the labels for external analysis, as we will see in our defined metric(s). Finally, all models process each data point in  $\mathcal{D}^{(k)}_{train}$ \textbf{only once (online adaptation)} and time steps are tracked in simulation within the variable $t$, i.e., each time that a data point $\mathbf{x}^{(k)}_i$ is sampled from $\mathcal{D}^{(k)}_{train}$ of task $\mathcal{T}_k$, $t$ is incremented as $t \leftarrow t + 1$. 

When a continual learning agent is finished training on task $\mathcal{T}_k$ (using $\mathcal{D}_{train}^{(k)}$), the data  $\mathcal{D}_{train}^{(k)}$ will be lost as soon as the model proceeds to task $\mathcal{T}_{k+1}$ (up to $\mathcal{T}_T$). Furthermore, note that $\mathcal{D}_{test}^{(k)}$ is only used for external model evaluation. The general objective/goal is to maximize the agent's generalization performance on task $\mathcal{T}_k$ while minimizing how much its performance degrades on prior tasks $\mathcal{T}_1$ to $\mathcal{T}_{k-1}$. 

\begin{algorithm}[t]
    \caption{The inference and learning processes for the (classical) SOM, formulated for the online stream setting.}
    \label{alg:kohonen_map}
    \textbf{Input}: sample $\mathbf{x}^{(k)}_i(t)$, synaptic weights $\mathbf{M}$, topology $\mathcal{G}$, simulation time step $t$ \\
    \textbf{Parameter}: $\lambda, \sigma, \tau_\lambda, \tau_\sigma, \lambda_{t=0}\gets\lambda, 
 \sigma_0\gets\sigma$
    \begin{algorithmic} 
        \Function{Update}{$\mathbf{x}^{(k)}_i(t)$, $\mathbf{M}$, $\mathcal{G}$, $\sigma_t$, $\lambda_t$}
            \Comment{Compute L2 distances and perform weighted updates}
            \State $\mathbf{\Delta_2} = \mathbf{x}^{(k)}_i(t) - \mathbf{M}$,  \label{alg:l2_distance}
            $\mathbf{d} = ||\mathbf{\Delta_2}[:,j]||_2, j = 0, 1,...H$
            \Comment{Compute the BMU, $u$ and its neighbors, $\mathcal{N}_u$}
            \State $u = \arg\min_{j \in H} \mathbf{\mathbf{d}}$,\; $\mathcal{N}_u = \Call{GetNeigh}{u, \mathbf{M}, \mathcal{G}}$  
        \For{$v_j \in (u \cup \mathcal{N}_u)$}  \Comment{Update synapses of the BMU and its neighbors}
            \State $\mathbf{M}[:,v_j] \leftarrow \mathbf{M}[:,v_j]  + \lambda_t \phi(u, v_j, \mathcal{G}, \sigma_t) (\mathbf{\Delta}[:,v_j])$
        \EndFor
        \EndFunction    

        \Function{Train}{$\mathbf{x}^{(k)}_i(t), \mathbf{M}, \mathcal{G}, t$} \Comment Online training routine \Call{Train}{.}, at time step $t$
        \State $\lambda_t = \lambda_0 \exp{(-t / \tau_\lambda)}$
        \Comment or use Equation \ref{eq:lambda_t} 
        \State $\sigma_t = \sigma_0 \exp{(-t / \tau_\sigma)}$
        \Comment or use Equation \ref{eq:sigma_t} 
        \State $\Call{Update}{\mathbf{x}^{(k)}_i(t), \mathbf{M}, \mathcal{G}, \sigma_t, \lambda_t}$
        \Comment inference step
        \State $t \leftarrow t + 1$ \Comment Advance simulation time forward
    \EndFunction
    \end{algorithmic}
\end{algorithm}

\subsection{The Kohonen Map}
\label{sec:som}

In Algorithm \ref{alg:kohonen_map}, we present our formulation of the classical SOM (Kohonen Map) for the online processing of data points from a (task) stream. 
Each data point $\mathbf{x}^{(k)}_i$ is sampled from the task $\mathcal{T}_k$ (with dataset $\mathcal{D}^{(k)}_{train}$). The SOM, with $H$ neuronal units, adapts its synaptic matrix $\mathbf{M} \in \mathcal{R}^{D \times H}$ given input $\mathbf{x}^{(k)}_i$. In this study, the topology $\mathcal{G}$ is a square grid of dimensions $K \times L$ (i.e., $H = K * L, K=L$). $\phi(u,v_j,\mathcal{G},\sigma_t)$ indicates a neighborhood weighting function used to scale the update to unit $v_j$ with synaptic vector $\mathbf{M}[:,v_j]$. Note that the weighting function depends on the euclidean distance between the BMU ($u$) and the neighboring units ($v_j$) in the cartesian plane.

Specifically, we see that inference is conducted in the $\Call{Update}{ }$ routine of Algorithm \ref{alg:kohonen_map}. The SOM first computes the (Euclidean) distance between the current sample $\mathbf{x}^{(t)}_i$ and all internal units in $\mathbf{M}$ and then stores these distances in the distance (row) vector $\mathbf{d} \in \mathcal{R}^{1 \times H}$. The BMU is calculated by taking the argmin over $\mathbf{d}$, returning the index of the BMU as integer $u$\footnote{Note that $u$ further maps to a fixed two-dimensional coordinate $(k_u, l_u)$, where $0 < k_u \leq K$ and $0 < l_u \leq L$, since $\mathcal{G}$ is rectangular in our formulation of SOMs.} 
Finally, the indices/coordinates of the neighboring units of BMU -- any such neighbor is indexed by $v_j$ -- are returned by the $\Call{GetNeigh}{ }$ sub-routine and stored in the array $\mathcal{N}_u$. 

To update the relevant synaptic weight vectors of the SOM's matrix $\mathbf{M}$, a loop runs through unit indices $(u \bigcup \mathcal{N}_u)$ and a weighted Hebbian update is applied. Note that the Hebbian rule in Algorithm \ref{alg:kohonen_map} reuses the difference vectors stored in $\mathbf{\Delta}$ and applies both a  dynamic learning rate $\lambda_t$ and a coefficient produced by $\phi(u, v_j, \mathcal{G}, \sigma_t)$. The neighborhood function $\phi(u, v_j, \mathcal{G}, \sigma_t)$ was set to be a Gaussian function, centered around the BMU indexed by $u$, with radius $\sigma_t$. Note that the \textbf{classical SOM maintains one global copy of learning rate $\lambda_t$ and radius $\sigma_t$ that is shared among all neurons}. They are designed to be dynamic, time-dependent functions and are shown in the $\Call{Train}{ }$ routine (which also depicts the full training step performed by our online SOM). We found, through empirical study, that the following set of equations 
yielded better clusters as opposed to original decay functions of \cite{kohonen1982self}:
\begin{align}
    \lambda_t = \lambda_0 (1 + t * \exp{(t / \tau_\lambda)})^{-1} \label{eq:lambda_t} \\
    \sigma_t = \sigma_0 (1 + t * \exp{(t / \tau_\sigma)})^{-1} \mbox{.} \label{eq:sigma_t}
\end{align}

\begin{figure}[!tb]
    \centering
    \includegraphics[width=\textwidth]{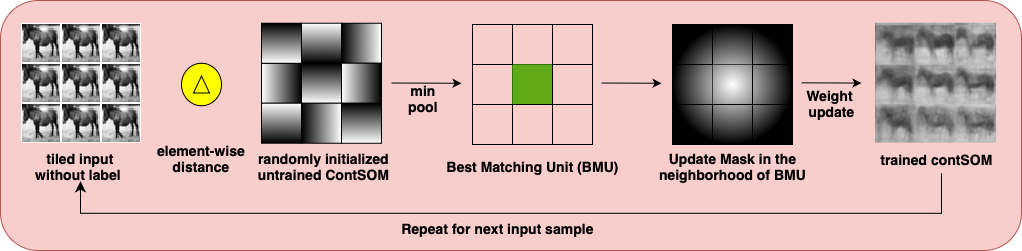}
    \caption{Illustration of the overall process for best-matching unit (BMU) selection and synaptic updating of the  CSOM for unsupervised online learning.}
    \label{fig:contsom_model}
\end{figure}

\begin{algorithm}[!tb]
    \caption{CSOM inference and learning processes.}
    \label{alg:cont_som}
    \begin{algorithmic}[1]
        \Require data sample $\mathbf{x}^{(k)}_i(t)$, SOM weight matrix $\mathbf{M}$, running variance matrix $\mathbf{M}^{\omega^2} \in \mathcal{R}^{D \times H}$, radius $\sigma_v, \sigma_V$, learning rate $\lambda_h$, BMU count for a unit ($\eta_u$ is count for unit $u$) 
        \State \textbf{Note:} $\text{HTile}(\mathbf{x},H)$ tiles vector $\mathbf{v}$ $H$ times horizontally; $\text{VTile}(\mathbf{x},H)$ tiles vector $\mathbf{x}$ $H$ times vertically, 
        \State \hspace{0.8cm} $d(a,b; \mathcal{G})$ is a distance over topology $\mathcal{G}$, where $a$ maps to $(a_i,a_j)$ (2D coordinate) and $b$ maps to $(b_i,b_j)$
        \Function{Update}{$\mathbf{X}_i, u, V, \sigma_u^{(t)}, \lambda_u^{(t)}, \mathbf{M}, \mathbf{M}^{\omega^2}, \mathcal{G}$}
        \State $\delta = (2 * \sigma_u^2)^{-1}$
        \State $\tau_1 = -\delta^{-1} * \log{(\frac{10^{-8}}{\lambda_u})}$
        \State $\mathbf{o}_{1,v_j} = 
                \begin{cases} 
                  1 & d(v_j,u; \mathcal{G}) < \sigma_u \\ 
                  0 & \text{otherwise} 
                \end{cases}$ \label{alg:binary_mask}
        \Comment{$v_j$ is any non-BMU neuron, $\mathbf{o} \in \{0,1\}^{1 \times H}$ (masking vector)} 
        \State $\mathbf{s}_{1,j} = 
                \begin{cases} 
                  \lambda_u & j = u \\ 
                  \lambda_v &  j = v_j 
                \end{cases}$ 
                \Comment{$\mathbf{s} \in \mathcal{R}^{1 \times H}$ learning rate vector}
        \State $\phi = \text{VTile}\Big( \{ \mathbf{o}_{v_j} \mathbf{s}_{v_j} \exp{[-d(u, j; \mathcal{G}) \delta]}, \; \text{for} \; j = 1,2,...,H \}, D \Big)$  \Comment{$\phi \in \mathcal{R}^{D \times H}$} 
        \State $\mathbf{M} = \mathbf{M} + \phi \odot (\mathbf{X}_i - \mathbf{M})$ \Comment{CSOM update step}
        \State $\eta_u = \eta_u + 1$
        \State $\lambda_{\omega} = \text{VTile}\Big( \{ (\lambda^0_{\omega} - 0.5) + (1 + \exp{(-d(u,j; \mathcal{G})/\tau_1)})^{-1}, \; \text{for} \; j = 1,2,...,H \}, D\Big)$ \Comment{$\lambda_\omega \in \mathcal{R}^{D \times H}$} \label{alg:init_lambda}
        \State $\lambda_\omega = \lambda_\omega \odot \mathbf{O} + (1 - \mathbf{O})$, where $\mathbf{O} = \text{VTile}(\mathbf{o},D)$  \Comment{scaling factor for updating running variance} 
        \State $\mathbf{M}^{\omega^2} = \lambda_\omega \mathbf{M}^{\omega^2} + (1 - \lambda_\omega) (\mathbf{X}_i - \mathbf{M})^2$ \Comment{update running variance of all neurons}
        \State $\sigma_u^{(t)} = \sigma_u^{(t-1)}  \exp{(\eta_u / \tau_{\sigma})}$ 
        \Comment{$\tau_{\sigma} \gets$ constant}
        \State $\lambda_u^{(t)} = \lambda_u^{(t-1)}  \exp{(\eta_u / \tau_\lambda)}$ 
        \Comment{$\tau_{\lambda} \gets$ constant}
        \EndFunction
        
        \Function{Train}{$\mathbf{x}^{(k)}_i(t), \mathbf{M}, \mathbf{M}^{\omega^2}, \mathcal{G}, t$}
        \State $\mathbf{X}_i \leftarrow \text{HTile}(\mathbf{x}^{(k)}_i(t), H)$ \Comment{tile/repeat $\mathbf{x}^{(t)}_i$ $H$ times horizontally to create $\mathbf{x}^{(k)}_i(t) = \mathbf{X}_i \in \mathcal{R}^{D \times H}$}
        \State $\mathbf{\Delta_\omega} = (\mathbf{X}_i - \mathbf{M})^2 / \mathbf{M}^{\omega}$   \Comment{distance calculation specific to CSOM , $\mathbf{M}^\omega = \sqrt{\mathbf{M}^{\omega^2}}$} \label{alg:csom_distance}
        \State $\mathbf{d} = \Delta_\omega[i,j]$ \Comment{$i \in \{1,2,..K\}$ and $j \in \{1,2,..L\}$}
        \State $u = \argmin_{h\in H} \mathbf{d}$  \Comment{BMU calculation}
        \State \Call{Update}{$X_i, u, V, \sigma_u^{(t)}, \lambda_u^{(t)}, \mathbf{M}, \mathbf{M}^{\omega^2}, \mathcal{G}$} \Comment{$V = H \setminus u $ (i.e., $V$ is set of all non-BMU indices)}
        \EndFunction
        \end{algorithmic}
\end{algorithm}

\subsection{The Continual Kohonen Map}
\label{sec:cont_som} 

In Algorithm \ref{alg:cont_som}, we present our proposed model, the continual SOM (CSOM \footnote{All the notations are summarized in Table \ref{tab:symbols}}, Figure \ref{fig:contsom_model}), built to process and generalize dynamically from samples drawn from a stream of tasks. Notice that, first of all, the CSOM now maintains an additional (non-negative) matrix $\mathbf{M}^{\omega^2} \in \mathcal{R}^{D \times H}_{+}$, which contains synaptic weight parameters associated with the ``running variance'' ($\omega^2$) of each neuronal unit in the system. This means that each unit/prototype in the CSOM is defined by two vectors of weights -- one for approximate unit means and another for approximate standard deviations.
The intuition is that each unit in our SOM model is generalized to maintain its own learnable multivariate Gaussian distribution (inspired by the latent variables of incremental Gaussian mixture models) with a diagonal covariance matrix. Notice that, in the $\Call{Update}{ }$ routine of our CSOM, the variance parameters are adjusted using a Hebbian-like rule inspired by Welford's online algorithm \cite{welford1962note} but modified to use the weighting provided by our CSOM's neighborhood function. We denote the initial scaling/update factor for running variance of every unit as $\lambda^0_{\omega}$ as shown in step \ref{alg:init_lambda} of Algorithm \ref{alg:cont_som}. In SOMs, the ratio of weight update that each connection goes through depends on its distance from the BMU. In CSOM, we empirically found that the update/scaling factor of running variance of every neuron is proportional to the magnitude of its weight update. Therefore, this update factor for a neuron is decayed as per its distance from the BMU to obtain $\lambda_{\omega}$ in step \ref{alg:init_lambda} of Algorithm \ref{alg:cont_som}. The running variance vector coupled to every neuron in the CSOM can be used to generate samples belonging to the class that matched it. Thus, the underlying structure of our neural system could be likened to a simple, dynamic generative model.

In addition to the local running variance parameters, the CSOM is designed to promote a form of neuronal competition driven by unit-centric learning and distance weighting parameters. \textbf{Specifically, each neuron $h \in H$ arranged in topology $\mathcal{G}$ is assigned an independently-controlled, dynamic radius $\sigma_h$ and learning rate $\lambda_h$ parameter (as shown in Figure \ref{fig:contsom_learning})}. As shown in Algorithm \ref{alg:cont_som}, in the routine $\Call{Update}{ }$, we particularly decay learning parameters only for the BMU ($u$), i.e., only $\sigma_u$ and $\lambda_u$ are decayed at time $t$. This localized decay is furthermore a function of the number of times that unit $u$ has been selected as the BMU, i.e., unit $u$ adjusts $\sigma_u$ and $\lambda_u$ as a function of its BMU count $\eta_u$. We do not decay $\sigma_u$ and $\lambda_u$ after they reach certain infinitesimally small threshold to ensure their values do not plummet to 0 and some positive weight update is always achieved as $t \rightarrow T$. The neighborhood function of the CSOM is also notably a function of the running variance parameters $\mathbf{M}^{\omega^2}$, further facilitating the calculation of a per-unit region of influence (treating each neuron as its own weighted multivariate Gaussian distribution). As a result, whenever a weight update is triggered, synaptic values are adjusted on a linear scale for the BMU $u$ while, for any non-BMU units $v_j$, the update is adjusted on a scale that decreases as the euclidean distance from $u$ increases. The CSOM utilizes a masking matrix ($\mathbf{o}$) to control the scale of its synaptic weight updates. The mask matrix also prevents any leaky weight updates caused by infinitesimally small values passed by the neighborhood function ($\phi$). 

Crucially, the proposed CSOM is a task-free model \cite{ClassIncrementalSurvey, Aljundi2018TaskFreeCL}, which means that it does not require any information about task boundaries (in the form of task descriptors). In addition, the CSOM, much like our reformulation of the SOM described earlier, is constructed to process data in an online, iterative fashion, adapting its per-unit parameters as a function of simulation time.

\begin{figure}[!tb]
    \centering
    \includegraphics[width=\textwidth]{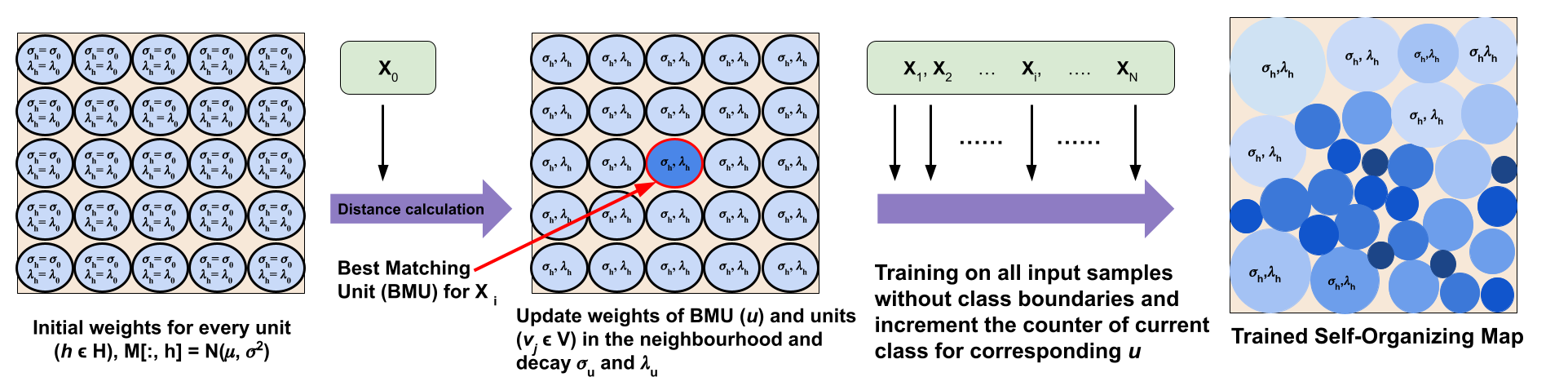}
    \caption{Unit centric parameter update in CSOM inspired from competitive learning}
    \label{fig:contsom_learning}
\end{figure}

\section{Mathematical Analysis}
In this section, we formally show that the CSOM reaches an equilibrium state or fixed point \cite{haag1974stability, stogin2024provably, burton2003stability}, meaning that it exhibits stability, a quality that improves its generalization ability as well as helps it avoid catastrophic forgetting.

Given the variance update rule in the continual self-organizing map (CSOM) described in the previous section, reproduce below as follows (for an arbitrary pattern vector sampled at time $t$):
\[ \omega_i^2(t+1) = \lambda_{\omega} \cdot \omega_i^2(t) + (1 - \lambda_{\omega}) \cdot (\mathbf{x}(t) - \mathbf{M}_i(t))^2, \]
where \(\lambda_{\omega} \in (0, 1)\) is the learning rate, we analyze the convergence of the series defining \(\omega_i^2(t)\).

\subsection*{Lemma: Convergence of the Variance Series}
\begin{lemma}
    The series for \(\omega_i^2(t)\) converges under the assumption that the sequence of squared deviations \({(\mathbf{x}(t) - \mathbf{M}_i(t))^2}\) is bounded.
\end{lemma}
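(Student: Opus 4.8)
The plan is to unroll the first-order linear recurrence into an explicit exponentially-weighted sum and then dominate that sum by a convergent geometric series. First I would let $B$ denote the uniform bound on the squared deviations guaranteed by the hypothesis, so that $0 \le (\mathbf{x}(t) - \mathbf{M}_i(t))^2 \le B$ (these quantities are nonnegative since they are squares) for all $t \ge 0$. Iterating the recurrence from an arbitrary initial value $\omega_i^2(0)$ yields
\[
\omega_i^2(t) = \lambda_\omega^{\,t}\, \omega_i^2(0) + (1-\lambda_\omega)\sum_{k=0}^{t-1} \lambda_\omega^{\,t-1-k}\,(\mathbf{x}(k) - \mathbf{M}_i(k))^2,
\]
which I would verify by a one-line induction on $t$. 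This reformulation is the core of the argument: it converts the dynamical question about the sequence $\{\omega_i^2(t)\}$ into a question about the summability of a series with geometrically decaying coefficients.

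Next I would establish the two pieces that force convergence. The transient term $\lambda_\omega^{\,t}\,\omega_i^2(0)$ vanishes as $t \to \infty$ because $\lambda_\omega \in (0,1)$ implies $\lambda_\omega^{\,t} \to 0$. For the driving sum, after reindexing with $j = t-1-k$ I would bound each summand using $B$, obtaining
\[
0 \le (1-\lambda_\omega)\sum_{k=0}^{t-1}\lambda_\omega^{\,t-1-k}(\mathbf{x}(k)-\mathbf{M}_i(k))^2 \le (1-\lambda_\omega)\,B\sum_{j=0}^{t-1}\lambda_\omega^{\,j} = B\,(1-\lambda_\omega^{\,t}).
\]
Since $\sum_{j=0}^{\infty}\lambda_\omega^{\,j} = (1-\lambda_\omega)^{-1}$ converges, the comparison test shows that the series of weighted contributions is absolutely convergent, and the running variance is uniformly bounded by $\max\{\omega_i^2(0),\,B\}$. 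Hence $\omega_i^2(t)$ never diverges and the influence of the distant past decays geometrically, which is precisely the stability claim.

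The subtle point — and the step I expect to require the most care in phrasing — is what ``converges'' should mean here, since a general bounded driving sequence $(\mathbf{x}(t)-\mathbf{M}_i(t))^2$ need not settle to a single value, so $\omega_i^2(t)$ need not possess a pointwise limit. I would therefore state the conclusion as absolute convergence (summability) of the series representation together with uniform boundedness of the iterates, which is all that boundedness of the deviations alone can buy. If one strengthens the hypothesis so that the squared deviations themselves converge to some limit $c$ (as happens once the unit means $\mathbf{M}_i$ stabilize on a task), then a standard Abel-type argument on the exponentially-weighted average upgrades this to $\omega_i^2(t) \to c$, yielding the genuine fixed point; I would flag this as the regime in which the equilibrium interpretation of the lemma becomes literal.
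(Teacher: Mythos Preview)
Your proposal is correct and follows essentially the same approach as the paper: unroll the recurrence into a transient term plus an exponentially-weighted sum, then dominate the latter by a convergent geometric series via the comparison test. Your treatment is in fact more careful than the paper's, which simply asserts that the series ``converges'' and that $\omega_i^2(t)$ ``stabilizes''; your final paragraph rightly flags that boundedness of the deviations alone yields only uniform boundedness of the iterates (absolute summability of the weighted contributions), and that a genuine pointwise limit requires the additional hypothesis that the squared deviations themselves converge.
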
 \label{lemma:conv}

\begin{proof}
Expanding the recurrence relation yields the following:
\[ \omega_i^2(t+1) = \lambda_{\omega}^t \cdot \omega_i^2(1) + (1 - \lambda_{\omega}) \sum_{k=0}^{t} \lambda_{\omega}^k \cdot \big(\mathbf{x}(t-k) - \mathbf{M}_i(t-k)\big)^2. \]

Assuming the sequence \(\big(\mathbf{x}(t) - \mathbf{M}_i(t)\big)^2\) is bounded by some value \(M > 0\), we analyze the convergence of the resultant geometric series:
\[ \sum_{k=0}^{\infty} \lambda_{\omega}^k \cdot (\mathbf{x}(t-k) - \mathbf{M}_i(t-k))^2. \]

Given \(0 < \lambda_{\omega} < 1\), the following series:
\[ \sum_{k=0}^{\infty} \lambda_{\omega}^k \]
is a convergent geometric series. By the comparison test, our series converges since each term is bounded by \(\lambda_{\omega}^k \cdot M\).

Hence, the series defining \(\omega_i^2(t)\) converges, particularly indicating that \(\omega_i^2(t)\) stabilizes as \(t \to \infty\), contributing to the SOM's capability to mitigate catastrophic forgetting by maintaining an adaptive and stable measure of variance.
\end{proof}

Next, we move to showing the convergence of the synaptic weight vectors of the proposed CSOM.
\begin{proposition}
For the weight vectors $\mathbf{M}_i(t)$ in a continual self-organizing map (CSOM) updated by the rule: 
\[ \mathbf{M}_i(t+1) = \mathbf{M}_i(t) + \lambda_i \cdot \phi(t, u, i) \cdot (\mathbf{x}(t) - \mathbf{M}_i(t)), \]
where $\lambda_i \in (0, 1)$ is a fixed learning rate, $u$ is the (integer) index of the best matching unit, and $\phi(t, u, i)$ is a neighborhood function\footnote{Note that, in Algorithm \ref{alg:cont_som}, we set this to be $\phi(t, u, i) = d(u, i)$.}, it holds that
\[ \|\mathbf{M}_i(t+1) - \mathbf{M}_i(t)\| \rightarrow 0 \]
as $t \rightarrow \infty$, indicating the convergence of the model's synaptic weight vectors.
\end{proposition}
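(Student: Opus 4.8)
The plan is to bound the per-step increment directly and then show that its two controlling factors are forced to zero. First I would write the increment norm explicitly,
\[ \|\mathbf{M}_i(t+1) - \mathbf{M}_i(t)\| = \lambda_i\, |\phi(t,u,i)|\, \|\mathbf{x}(t) - \mathbf{M}_i(t)\|, \]
so the claim reduces to controlling the product of the effective step size $\lambda_i |\phi(t,u,i)|$ and the residual $\|\mathbf{x}(t) - \mathbf{M}_i(t)\|$. Neither factor trivially vanishes, so the argument must show that their product does.

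Second, I would establish that the residual stays bounded. Rewriting the update as $\mathbf{M}_i(t+1) = (1-\alpha_t)\mathbf{M}_i(t) + \alpha_t\, \mathbf{x}(t)$ with $\alpha_t = \lambda_i \phi(t,u,i)$, and observing that $\alpha_t \in [0,1)$ (since $\lambda_i \in (0,1)$ and the masked Gaussian neighborhood weights lie in $[0,1]$), each $\mathbf{M}_i(t)$ is a convex combination of the initial weight and the inputs seen so far. Hence $\mathbf{M}_i(t)$ stays in the convex hull of $\{\mathbf{M}_i(1)\} \cup \{\mathbf{x}(s)\}_{s \le t}$. Under the same boundedness assumption used in the preceding lemma (bounded inputs, which together with bounded weights give bounded squared deviations), there is a constant $B$ with $\|\mathbf{x}(t) - \mathbf{M}_i(t)\| \le B$ for all $t$.

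Third, I would argue that the effective step size vanishes, which is where the CSOM's decay machinery enters. The per-unit radius $\sigma_u$ and learning rate are decayed as functions of the BMU count $\eta_u$, so as $t \to \infty$ the factor $\lambda_i \phi(t,u,i)$ is driven toward zero: for a unit repeatedly chosen as BMU, the decaying rate shrinks its own increment, while for a unit far from the current BMU the shrinking radius drives the Gaussian weight, and eventually the binary mask $\mathbf{o}$, to zero. Combining this with the uniform bound $B$ yields $\|\mathbf{M}_i(t+1) - \mathbf{M}_i(t)\| \le \lambda_i \phi(t,u,i)\, B \to 0$. One can strengthen this to genuine convergence of $\mathbf{M}_i(t)$ by showing $\sum_t \lambda_i \phi(t,u,i) < \infty$ via a telescoping/Cauchy argument, which the exponential decay of $\sigma_u$ supports.

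The main obstacle is this third step, precisely because the stream $\mathbf{x}(t)$ need not converge, so the residual does not vanish on its own and the entire convergence must be carried by the decay of the effective step size. The delicate point is that $\phi(t,u,i)$ depends on the data-dependent BMU trajectory: to guarantee decay for every unit one must either assume each unit is selected as BMU infinitely often (so its count $\eta_u \to \infty$ and its rate decays) or treat rarely-selected units separately, where the shrinking neighborhood radius and masking suppress their updates regardless. Making this case distinction rigorous, and reconciling the fixed-$\lambda_i$ statement of the proposition with the per-unit decaying rates actually used in Algorithm \ref{alg:cont_som}, is the crux of the argument.
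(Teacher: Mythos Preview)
Your proposal follows essentially the same route as the paper: write the increment as $\lambda_i\,\phi(t,u,i)\,(\mathbf{x}(t)-\mathbf{M}_i(t))$, bound the residual using boundedness of the inputs, and argue that the neighborhood/learning-rate factor decays to zero so the product vanishes. Your version is in fact more careful than the paper's in two places---the convex-hull argument that keeps $\mathbf{M}_i(t)$ itself bounded, and the explicit case distinction on whether a unit is selected as BMU infinitely often---both of which the paper's proof passes over informally.
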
 \label{prop:conv_wt}

\begin{proof}
The difference between successive weight vectors is given by:
\[ \Delta \mathbf{M}_i(t) = \mathbf{M}_i(t+1) - \mathbf{M}_i(t) = \lambda_i \cdot \phi(t, u, i) \cdot \big(\mathbf{x}(t) - \mathbf{M}_i(t)\big). \]

To demonstrate convergence, we need to show that $\|\Delta \mathbf{M}_i(t)\| \rightarrow 0$ as $t \rightarrow \infty$. 

1. \textbf{Bounding the Update Magnitude:} Notice that $\|\Delta \mathbf{M}_i(t)\|$ is bounded above by $\lambda_i \cdot \|\phi(t, i) \cdot (\mathbf{x}(t) - \mathbf{M}_i(t))\|$. Since $\lambda_i \in (0, 1)$ and assuming $\phi(t, u, i)$ is designed to decrease over time and as the distance from the BMU increases, then $\|\phi(t, u, i) \cdot (\mathbf{x}(t) - \mathbf{M}_i(t))\|$ also decreases.

2. \textbf{Decrease Over Time:} As $t$ increases, the influence of $\phi(t, u, i)$ diminishes, which in turn reduces the magnitude of $\Delta \mathbf{M}_i(t)$. This reduction is due to the adaptation of the weight vectors to the distribution of the input pattern vectors, leading to smaller corrections required over time as the neural map stabilizes.

3. \textbf{Limiting Behavior:} Given that the sequence of input vectors $\{\mathbf{x}(t)\}$ is bounded (a common assumption in many applications of the SOM), and the learning rate $\lambda_i$ is fixed or changes at steady rate ($\lambda_i >0$), the product $\lambda_i \cdot \phi(t, u, i)$ ensures that the updates to $\mathbf{M}_i(t)$ decrease in magnitude. Hence, $\|\Delta \mathbf{M}_i(t)\| \rightarrow 0$ as $t \rightarrow \infty$.

Therefore, the synaptic weight vectors $\mathbf{M}_i(t)$ converge as evidenced by the decrease in the magnitude of the updates between successive time steps, confirming the proposition.
\end{proof}

Next, we show that the CSOM reaches an equilibrium state as well as reaches a fixed point, indicating its continual adaptation of knowledge:
\begin{theorem}
Equilibrium in a continual self-organizing map (CSOM) is achieved through the calibration of the learning rate values $\lambda_{\omega}$ and $\lambda_i$, alongside the topological considerations enforced by the neighborhood radius $\sigma_u$ as well as the neighborhood function $\phi(t, u, i, \sigma_u)$. This calibrated mechanism facilitates the CSOM's adaptation to new input patterns while allowing it retain critical information about the distribution of past inputs, thus mitigating catastrophic forgetting.
\end{theorem}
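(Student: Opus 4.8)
The plan is to treat the two preceding results as the two coordinates of a single dynamical system and to argue that their joint convergence \emph{is} the equilibrium asserted here. Concretely, I would define the per-unit state $\mathbf{z}_i(t) = \big(\mathbf{M}_i(t), \omega_i^2(t)\big)$ and declare the CSOM to be at (asymptotic) equilibrium when $\|\mathbf{z}_i(t+1) - \mathbf{z}_i(t)\| \to 0$ for every unit $i$ as $t \to \infty$. With this reading the theorem reduces to showing that each coordinate is asymptotically stationary, which is exactly what the weight-convergence Proposition and the variance-convergence Lemma already supply; the remaining work is to assemble them into a joint fixed point and to explain the calibration of $\lambda_\omega$, $\lambda_i$, $\sigma_u$, and $\phi$ that makes this happen.

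First I would invoke the Proposition to obtain $\|\Delta \mathbf{M}_i(t)\| \to 0$, stressing the CSOM-specific mechanism: the per-unit learning rate $\lambda_u$ and radius $\sigma_u$ are decayed as functions of the BMU count $\eta_u$ in the update routine of Algorithm \ref{alg:cont_som}, so the effective coefficient $\lambda_i\,\phi(t,u,i,\sigma_u)$ shrinks for any unit that keeps winning, forcing its mean update to vanish. Second I would apply the Lemma: since $\mathbf{M}_i(t)$ stabilizes, the squared deviations $(\mathbf{x}(t) - \mathbf{M}_i(t))^2$ stay bounded and the convex-combination recurrence for $\omega_i^2(t)$ converges to a fixed weighted average. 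Combining the two yields $\|\mathbf{z}_i(t+1) - \mathbf{z}_i(t)\| \to 0$, and I would characterize the fixed point $(\mathbf{M}_i^\star, \omega_i^{2\star})$ by setting both increments to zero, so that $\mathbf{M}_i^\star$ sits at the neighborhood-weighted mean of the inputs routed to unit $i$ while $\omega_i^{2\star}$ records the residual variance about that mean. The role of $\sigma_u$ and $\phi$ I would fold into the forgetting-mitigation half of the claim: as $\sigma_u$ contracts, $\delta = (2\sigma_u^2)^{-1} \to \infty$, the Gaussian neighborhood narrows, and the binary mask $\mathbf{o}$ zeroes out increasingly many non-BMU updates, so each converged unit's influence on its neighbors decays to zero. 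I would make this quantitative by bounding the off-BMU term $\lambda_v \exp[-d(u,j;\mathcal{G})\delta]$ and showing it vanishes as $\sigma_u \to 0$.

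The main obstacle is the stability-plasticity tension hidden in the statement: the theorem simultaneously claims convergence to equilibrium \emph{and} continued adaptation to new patterns, and the naive reading in which all learning rates decay to $0$ would buy stability at the cost of all plasticity. The crux of a rigorous argument is that the decay is \emph{per-unit and BMU-gated} — only the winning unit's $\lambda_u, \sigma_u$ are annealed, and they are floored at a positive threshold — so a unit that has rarely won retains a large learning rate and can still capture novel task structure. I would therefore phrase the equilibrium as a \emph{local}, per-unit fixed point, argue that units specializing to early tasks freeze while fresh or under-utilized units remain plastic, and observe that the positive floor on $\lambda_u$ yields a bounded-but-nonzero residual update, i.e. a quasi-equilibrium rather than an exact one. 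Finally, to formalize ``retains critical information about past inputs'' I would add an invariance step: once $\mathbf{M}_i$ has frozen, any subsequent update triggered by a \emph{different} BMU leaves $\mathbf{M}_i$ unchanged because the mask $\mathbf{o}$ and the decayed neighborhood set its coefficient to (essentially) zero, which is precisely the decoupling that links convergence to non-forgetting.
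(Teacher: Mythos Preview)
Your proposal is correct and follows essentially the same approach as the paper: define equilibrium as the joint vanishing of the increments $\|\omega_i^2(t+1)-\omega_i^2(t)\|\to 0$ and $\|\mathbf{M}_i(t+1)-\mathbf{M}_i(t)\|\to 0$, obtain each from the preceding Lemma and Proposition respectively, and attribute the forgetting-mitigation to the shrinking of $\sigma_u$ and the attendant narrowing of $\phi$. Your treatment is in fact more detailed than the paper's own argument---the explicit joint state $\mathbf{z}_i(t)$, the fixed-point characterization $(\mathbf{M}_i^\star,\omega_i^{2\star})$, the quantitative bound on the off-BMU coefficient, and especially your careful handling of the stability--plasticity tension via the per-unit BMU-gated decay and positive floor are all refinements that the paper leaves implicit or omits; the paper's proof remains at the level of naming the roles of $\lambda_\omega$, $\lambda_i$, $\sigma_u$, and $\phi$ and stating that the two increment conditions hold.
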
 \label{thm:fixed_pt}

\begin{proof}
The proof of the above theorem focuses on the stabilization brought about by these parameters:

\textbf{1. Learning Rate Impact:}
The learning rates $\lambda_{\omega}$ and $\lambda_i$ directly influence the rate of adaptation for variance and weight vectors, respectively. A careful balance of these rates ensures that the CSOM network remains sensitive to new inputs without rapidly discarding historical data.

\textbf{2. Neighborhood Radius and Function:}
The neighborhood radius $\sigma_u$, which typically decreases over time, determines the extent to which the neighborhood surrounding the BMU is affected by each input pattern. The neighborhood function $\phi(t, u, i, \sigma_u)$ modulates the update magnitude based on $\sigma_u$ and the neuron's distance from the BMU, ensuring a cohesive topological adaptation across the network.

\textbf{Mathematical Justification:}
The equilibrium condition, characterized by diminishing updates to both variance and weight vectors, is achieved when $\|\omega_i^2(t+1) - \omega_i^2(t)\| \rightarrow 0$ and $\|\mathbf{M}_i(t+1) - \mathbf{M}_i(t)\| \rightarrow 0$ as $t \rightarrow \infty$. The decrease in $\sigma_u$ over time reduces the influence of new inputs on distant neurons, contributing to the network's overall stability.

\textbf{Conclusion:}
Through the interplay of $\lambda_{\omega}$, $\lambda_i$, $\sigma_u$, and $\phi(t, u, i, \sigma_u)$, the SOM achieves a dynamic equilibrium. This balance between sensitivity to new data and the retention of historical patterns allows the CSOM to adaptively map its input space (to neuronal unit space) while mitigating the risk of catastrophic forgetting.
\end{proof}

Finally, we formally show that the CSOM and its proposed synaptic update rule help the model to mitigate forgetting and learn new features without compromising prior knowledge:
\begin{theorem}
   Given a sequence of input vectors $\{\mathbf{x}(t)\}_{t=1}^\infty$ where $\mathbf{x}(t) \in \mathbb{R}^D$, and a continual self-organizing map (CSOM) with $H$ neurons, each neuron $i$ characterized by a weight vector $\mathbf{M}_i \in \mathbb{R}^D$ and a running variance $\omega_i^2 \in \mathbb{R}$, updated at each time $t$ by:

1. Variance Update Rule:
\[ \omega_i^2(t+1) = \lambda_{\omega} \cdot \omega_i^2(t) + (1 - \lambda_{\omega}) \cdot (\mathbf{x}(t) - \mathbf{M}_i(t))^2, \]

2. Synaptic Weight Update Rule:
\[ \mathbf{M}_i(t+1) = \mathbf{M}_i(t) + \lambda_i \cdot \phi(t, u, i) \cdot (\mathbf{x}(t) - \mathbf{M}_i(t)), \]

with $\lambda_{\omega}, \lambda_i \in (0, 1)$ as learning rates, and $\phi(t, u, i)$ as the neighborhood function, then the SOM achieves a dynamic equilibrium, enhancing its stability and mitigating catastrophic forgetting.
\end{theorem}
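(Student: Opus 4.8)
The plan is to treat this final theorem as the consolidation of the preceding Lemma and Proposition, since it merely restates both update rules and asks for their joint stabilization. First I would make the phrase \emph{dynamic equilibrium} precise by reading it as the simultaneous vanishing of successive increments, i.e. $\|\omega_i^2(t+1) - \omega_i^2(t)\| \rightarrow 0$ and $\|\mathbf{M}_i(t+1) - \mathbf{M}_i(t)\| \rightarrow 0$ as $t \rightarrow \infty$ for every unit $i$. Under this reading the theorem decomposes cleanly into one statement about the variance channel and one about the weight channel, each of which is already essentially established earlier in the section.

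The key steps, in order, are as follows. Step one: discharge the boundedness hypothesis that the Lemma assumes. I would observe that whenever $\lambda_i\,\phi(t,u,i) \in [0,1]$, the weight update $\mathbf{M}_i(t+1) = (1 - \lambda_i\phi)\,\mathbf{M}_i(t) + \lambda_i\phi\,\mathbf{x}(t)$ is a convex combination, so each $\mathbf{M}_i(t)$ stays in the convex hull of its initial value and the inputs seen so far; if the stream $\{\mathbf{x}(t)\}$ is bounded (the standing assumption), then $(\mathbf{x}(t) - \mathbf{M}_i(t))^2$ is bounded by some $M$, which is exactly the hypothesis the Lemma requires. Step two: invoke the Lemma to conclude that $\omega_i^2(t)$ converges, hence $\|\omega_i^2(t+1) - \omega_i^2(t)\| \rightarrow 0$. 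Step three: invoke the Proposition to obtain $\|\mathbf{M}_i(t+1) - \mathbf{M}_i(t)\| \rightarrow 0$. Step four: combine the two limits to assert the equilibrium, and then argue the forgetting consequence, namely that because the per-unit decay is driven by the BMU count $\eta_u$ (only the winning unit's $\lambda_u,\sigma_u$ shrink), a unit that has consolidated a prototype has low plasticity and a narrow region of influence, while competition forces patterns from a new task onto different, still-plastic units; the resulting near-disjoint assignment is what suppresses cross-task interference.

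The hard part will be Step three, and specifically reconciling it with the implementation detail that $\lambda_u$ and $\sigma_u$ are floored at a small positive threshold so that some plasticity always survives. Under that floor, $\lambda_i\,\phi(t,u,i)$ does \emph{not} tend to zero for a unit that keeps winning, so $\|\mathbf{M}_i(t+1) - \mathbf{M}_i(t)\| \rightarrow 0$ cannot be obtained from the decay of the learning rate alone; it must instead come from $(\mathbf{x}(t) - \mathbf{M}_i(t)) \rightarrow 0$, which only holds in an averaged sense as $\mathbf{M}_i$ tracks the mean of its responsibility region. Making this rigorous calls for a stochastic-approximation argument of Robbins--Monro type, whose classical step-size conditions $\sum_t \lambda_t = \infty$ and $\sum_t \lambda_t^2 < \infty$ are in tension with a positive floor. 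I would therefore prove the honest statement that the increments converge to a bounded \emph{neighborhood} of zero, an equilibrium band whose width is controlled by the floor and by $\max_i \omega_i^2$, rather than exact convergence to a fixed point; this weaker but correct conclusion still delivers the stability and bounded-interference properties the theorem is invoked for.
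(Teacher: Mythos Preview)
Your proposal follows essentially the same three-part decomposition as the paper's own proof: invoke the Lemma for variance stability, invoke the Proposition for weight-vector convergence, and then combine them (the paper routes the combination through its preceding equilibrium Theorem, but the content is the same). So at the level of overall strategy you are aligned with the paper.

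Where you differ is in rigor, and in both places the difference is to your credit. First, your Step~1 convex-combination argument actually \emph{discharges} the boundedness hypothesis $(\mathbf{x}(t)-\mathbf{M}_i(t))^2 \le M$ that the Lemma assumes; the paper simply asserts boundedness without justification, so your version closes a gap the paper leaves open. Second, your discussion of the learning-rate floor is a genuine observation that the paper does not confront: the paper's Proposition obtains $\|\Delta\mathbf{M}_i(t)\|\to 0$ purely from the decay of $\phi(t,u,i)$, while the algorithm explicitly floors $\lambda_u,\sigma_u$ away from zero, so that mechanism cannot by itself force the increments to vanish. Your proposed resolution---prove convergence to a bounded equilibrium band whose width scales with the floor---is the honest statement, and it is stronger than what the paper actually establishes. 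In short, your plan matches the paper's route but patches two soft spots in it; nothing in your outline is wrong or missing relative to the paper's argument.
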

\begin{proof}
    The proof consists of three key parts:

\textbf{Part 1: Variance Stability.} The exponential decay factor $(1-\lambda_{\omega})$ ensures that older inputs have a diminishing influence on the running variance $\omega_i^2(t)$. For large $t$, this stabilizes, indicating the SOM's ability to incorporate new variance information without losing the significance of past data variance. Mathematically, this can be shown by analyzing the series convergence of $\omega_i^2(t)$, which was demonstrated in Lemma \ref{lemma:conv}

\textbf{Part 2: Convergence of Synaptic Weight Vectors.} The weight vectors $\mathbf{M}_i(t)$ are updated towards (the direction of) new inputs with a learning rate that is modulated by $\phi(t, i)$, ensuring that neurons closer to the BMU have a higher rate of adaptation. This promotes the stability and convergence of $\mathbf{M}_i(t)$, which is crucial for maintaining knowledge of learned patterns over time. The convergence can be argued by demonstrating that the updates lead to a decrease in the difference between successive weight vectors, $\|\mathbf{M}_i(t+1) - \mathbf{M}_i(t)\| \rightarrow 0$ as $t \rightarrow \infty$, as was formally proven in Proposition \ref{prop:conv_wt}.

\textbf{Part 3: Dynamic Equilibrium and the Mitigation of Catastrophic Forgetting.} Combining the effects of the variance and synaptic weight updates, the CSOM maintains a balance between adapting to new (sensory) data and preserving information about the distribution of the past inputs it has encountered. This equilibrium is achieved through the careful calibration of $\lambda_{\omega}$ and $\lambda_i$, alongside the topological considerations enforced by $\phi(t, u, i)$, as shown in Theorem \ref{thm:fixed_pt}. 

Thus, this mathematical framework ensures that the CSOM's neuronal units' updates are such that neither new nor old information is disproportionately favored, which mitigates catastrophic forgetting. 
Therefore, by the specified update rules and their mathematical properties, the CSOM achieves a dynamic equilibrium that effectively mitigates catastrophic forgetting.
\end{proof}

\section{Experiments}
\label{sec:results}
To test our proposed CSOM and determine its performance on competitive datasets, we performed experiments across three databases under two continual learning setups. 

\subsection{Datasets}
\label{ssec:datasets}
To evaluate the CSOM, we employed three grayscale datasets along with one containing natural image samples. 

\paragraph{Split MNIST, Fashion MNIST, and KMNIST:}
We used variations of MNIST to test our neural models -- specifically, the original MNIST database \cite{mnist}, Fashion MNIST (FMNIST) \cite{fmnist}, and Kuzushiji-MNIST (KMNIST) \cite{kmnist}, all containing $28 \times 28$ gray-scale pixel images. 
Furthermore, we transformed these datasets by normalizing them, i.e., diving them by $255.0$, which brought them into the value range of $[0.0,1.0]$, thus making the learning process easier for the SOMs/CSOM models. 

\paragraph{Split CIFAR-10:}

To test the CSOM on images belonging to different (more complex) distributions, we also utilized the CIFAR-10 dataset \cite{Krizhevsky2009LearningML}, which contains a large number of samples with $32 \times 32$ Red-Green-Blue (RGB) images. 
We specifically converted the CIFAR-10 images to grayscale to test the neural models of this study (and defer the use of separate color channels for future work). Furthermore, we normalized the images as in the MNIST datasets, bringing their pixel values into the range of $[0.0,1.0]$.

\begin{figure}[!t]
    \centering
    \begin{subfigure}{0.32\textwidth}
        \centering
        \includegraphics[width=\linewidth]{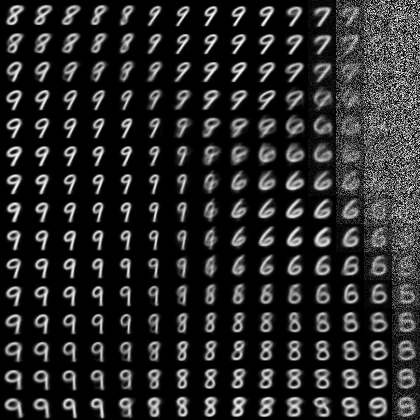}
        \label{fig:0}
    \end{subfigure}
    \begin{subfigure}{0.32\textwidth}
        \centering
        \includegraphics[width=\linewidth]{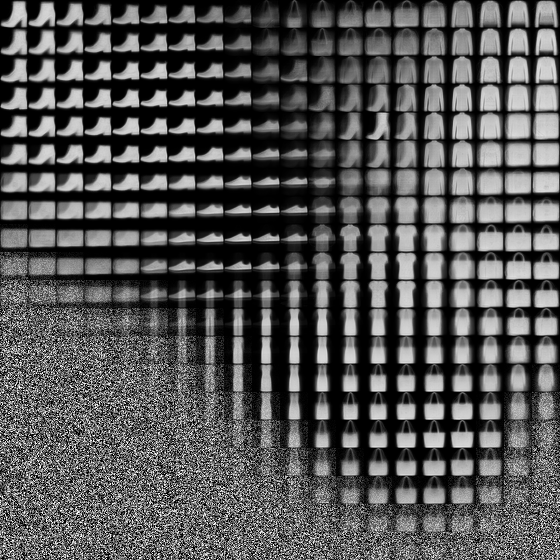}
        \label{fig:1}
    \end{subfigure}
    \begin{subfigure}{0.32\textwidth}
        \centering
        \includegraphics[width=\linewidth]{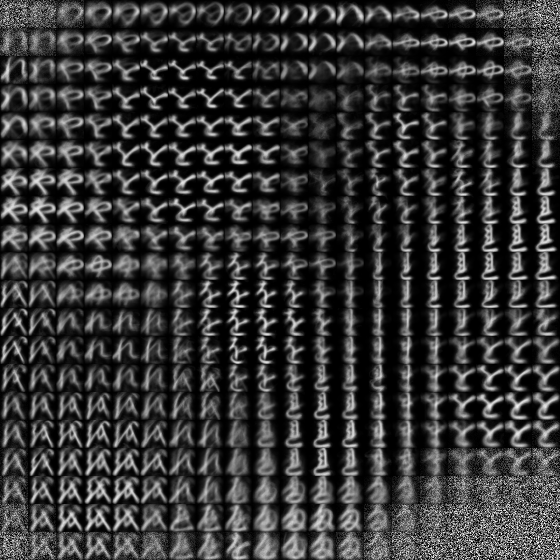}
        \label{fig:2}
    \end{subfigure} \\
    \vspace{-0.2cm}
    \begin{subfigure}{0.32\textwidth}
        \centering
        \includegraphics[width=\linewidth]{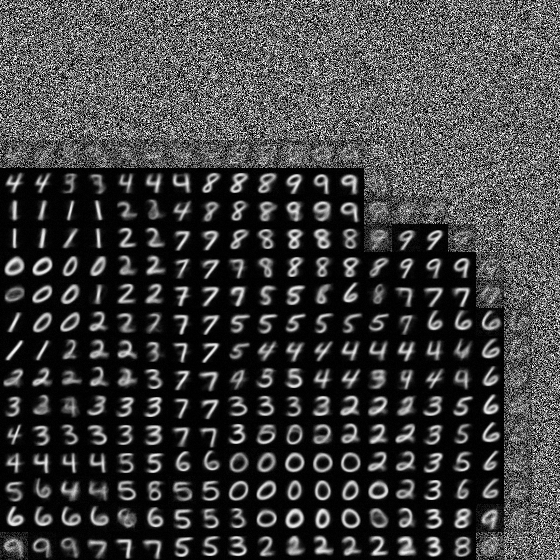}
        \label{fig:4}
    \end{subfigure}
    \begin{subfigure}{0.32\textwidth}
        \centering
        \includegraphics[width=\linewidth]{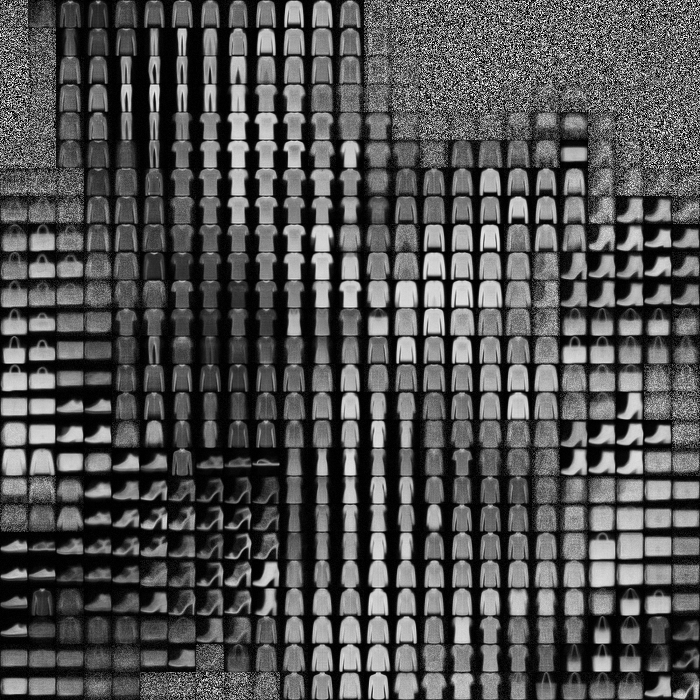}
        \label{fig:5}
    \end{subfigure}
    \begin{subfigure}{0.32\textwidth}
        \centering
        \includegraphics[width=\linewidth]{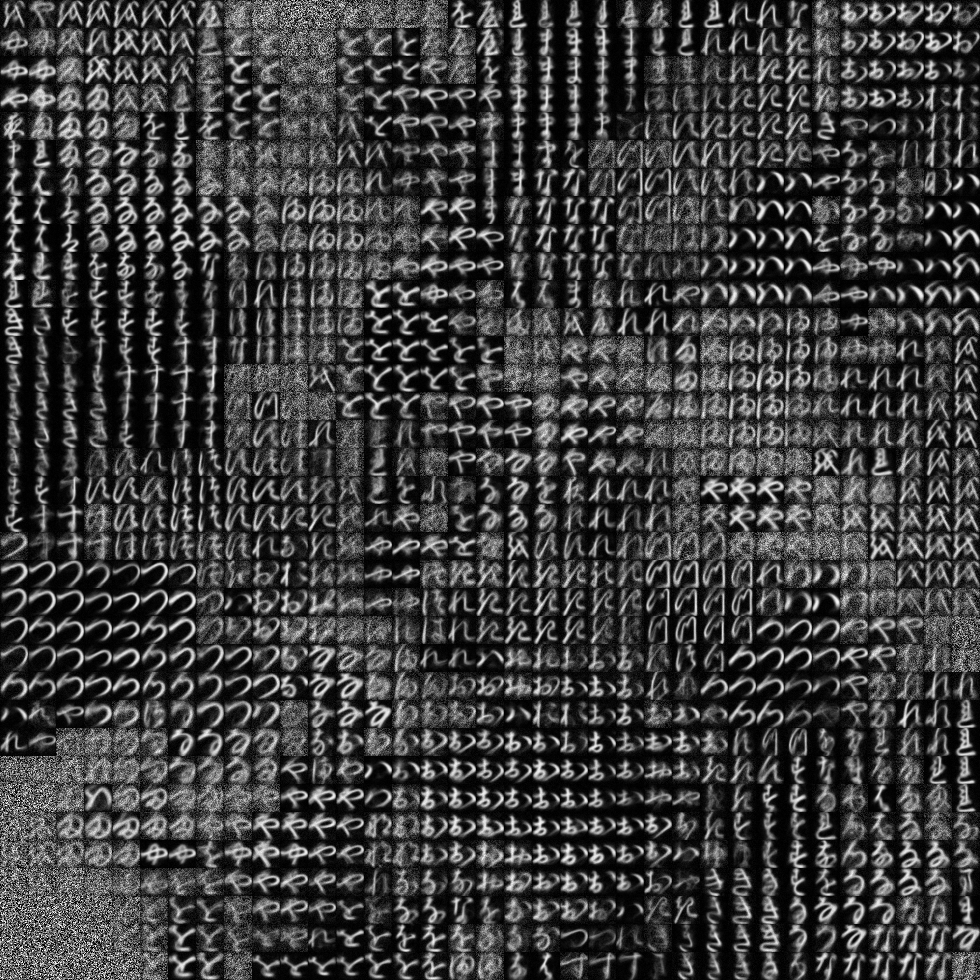}
        \label{fig:6}
    \end{subfigure}
    \caption{Class-incrementally adapted classical SOM (top row) versus continual SOM (bottom row) on: MNIST (Left), FashionMNIST (Middle), and KMNIST (Right). 
    }
    \label{fig:contSOM_results_incremental}
\end{figure}

\begin{figure}[!t]
    \centering
    \begin{subfigure}{0.32\textwidth}
        \centering
        \includegraphics[width=\linewidth]{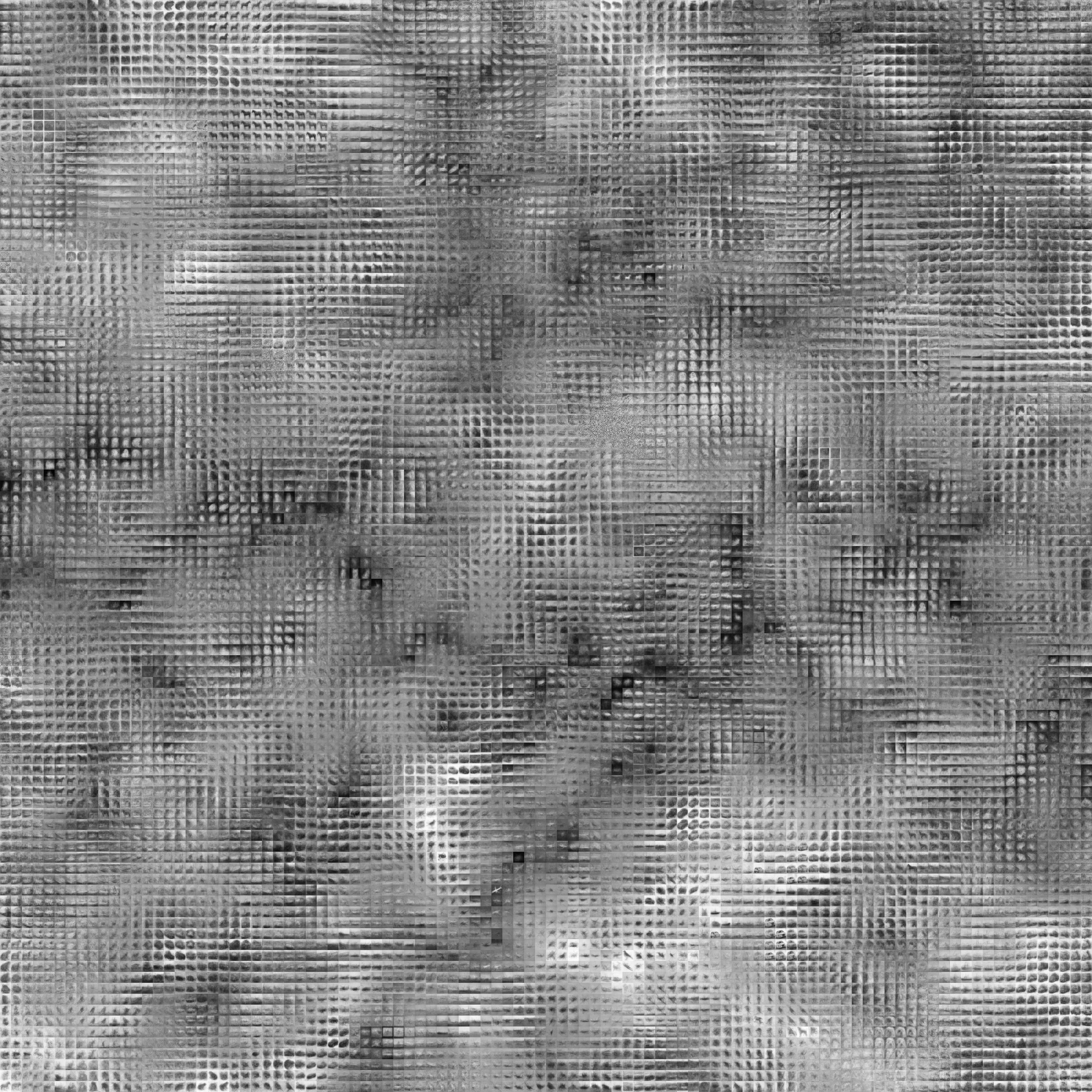}
    \end{subfigure}
    \begin{subfigure}{0.32\textwidth}
        \centering
        \includegraphics[width=\linewidth]{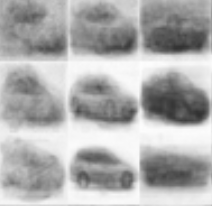}
    \end{subfigure}
    \begin{subfigure}{0.32\textwidth}
        \centering
        \includegraphics[width=\linewidth]{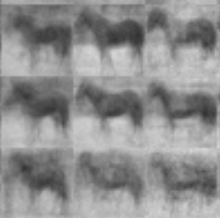}
    \end{subfigure}
    \caption{(Left) CSOM containing 100x100 neurons/units trained class incrementally on grayscale images of CIFAR-10. (Right) Two snapshots of trained CSOM clusters}
    \label{fig:cifar_som}
\end{figure}

\paragraph{Model Baselines:}
We adapt the Dendritic SOM (DendSOM) \cite{dendSOM}, further utilizing their label prediction method to evaluate the performance of our proposed model. The DendSOM uses a hit matrix to determine the label of a trained unit in the evaluation phase. We maintain a similar hit matrix (BMU count, $\eta_u$) later used in the evaluation phase for label prediction. This ultimately helps us to establish a fair comparison between our model and baselines.

\subsection{Learning Simulation Setups}
\label{sec:sim_setups}

\noindent
\textbf{Class Incremental Learning: }In this study, we focus on the continual learning setup where tasks consist of data points belonging entirely to one single class. Specifically, this means that for any given task $\mathcal{T}_t$, its training dataset $\mathcal{D}^{(t)}_{train}$ contains all data points labeled under one specific class $c \in C$ (where $C$ is the total number of unique classes in the entire benchmark dataset). For instance, for the case of Split-MNIST, this means that task $\mathcal{T}_0$ consists entirely of patterns with the class label of the digit zero, and task $\mathcal{T}_1$ consists entirely of the digit one, and so on and so forth. Furthermore, note that all models and baselines studied are trained under the same setup with the same seeding/shuffling of data points in order to ensure a fair comparison. All models are trained to process the data points incrementally/online, one sample at a time, and no data points are ever revisited (meaning that each model is only allowed one single epoch or pass through an entire task data subset, effectively simulating the online streaming learning setting \cite{Agarwal2008KernelbasedOM, 10.1609/aaai.v33i01.33013232}).

\noindent
\textbf{Domain Incremental Learning: } In this setup we simulated a domain incremental learning within a given dataset. This is similar to a task incremental setup; however, here, we divided the datasets into five tasks, where each task had data samples representing two digits, labeled in binary, i.e., $c \in [0,1]$ ($0$ encoded class one, $1$ encoded class two).

\subsection{Training and Architecture Parameters}
We followed a square design for the topology underlying all of the models examined in this work. This means that every SOM/CSOM had $K \times L$ units in its topology, ($K = L$). 
In Table \ref{tab:params}, the column for ($K = L$) indicates values of $K$ for MNIST, Fashion-MNIST, KMNIST, respectively. The initial value of the running variance for all pixels in every SOM unit ($\omega^2_0$) was decided based on the distribution of pixels in the input dataset. We found empirically that setting $\omega^2_0$ slightly higher than the variance of input distribution gave us a good starting point. Similarly, setting $\lambda^0_\omega$ within the range $[0.9, 0.99]$ helped to maintain an effective balance between the old and new values of the running variance at every simulation step. For the DendSOM, we modeled/instantiated four copies of SOM ($M$) where the unit topology size in each copy was [$14 \times 14$]. Since the DendSOM breaks the input image into patches for every copy of SOM, the easiest way to divide an MNIST-type of image having dimensions [$28 \times 28$] would be to have its four patches be of shape [$14 \times 14$]. We followed this for the implementations of DendSOM presented in our work. Through a brute force approach, we found that setting $\tau_{\sigma} = 8$ and $\tau_{\lambda} = 45$ performed a gradual decay of $\sigma$ and $\lambda$ throughout training.

\begin{table}[!t]
    \centering
    \begin{tabular}{cccccc}
        \hline
        Model & ($K = L$) & $\sigma_0$ & $\lambda_0$ & $\omega^2_0$ & $\lambda^0_{\omega}$ \\
        \hline
        Vanilla SOM & [15, 20, 20] & 0.6 & 0.07 & 1 & 0 \\
        \hline
        DendSOM & [10, 10, 10] & 1.5 & 0.07 & 0 & 0 \\
        \hline
        CSOM & [15/20, 25, 35] & 1.5 & 0.07 & 0.5 & 0.9 \\
        \hline
    \end{tabular}
    \caption{Parameter setting for the architectures and their training}
    \label{tab:params}
\end{table}

\subsection{Evaluation}
\label{ssec:evaluation}

For model evaluation, we adapted the point-wise mutual information (PMI) \cite{dendSOM} measure, which loosely follows the Hebbian rule of learning to suit the design logic of CSOM. Formally, the PMI is calculated as follows:
\begin{align}
    PMI(l; BMU) = \log{\frac{P(l|BMU)}{P(l)}} \label{eq:pmi} \\
    P(l|BMU) = \frac{\eta[l, BMU]}{\Sigma_{i \in Labels} \eta[i, BMU]} \label{eq:pmi_numerator}\\
    P(l) = \frac{\Sigma_{h \in Units}\eta[l, h]}{\Sigma_{h \in Units} \Sigma_{i \in Labels} \eta[i, h]} \label{eq:pmi_denominator} \\
    Predicted\ Label = \argmax_{l \in Labels} PMI(l; BMU). \label{eq:prediction}
\end{align}
The PMI of two entities could take on either a positive or negative value, depending on whether they co-occur more frequently or less frequently compared to an independence assumption. The PMI measure can become zero if the two entities are independent of each other. This PMI measure is used to predict the label of a trained unit based on a hit matrix (i.e. BMU count, $\eta_u$) which stores the number of times that a unit was selected as the BMU for any input sample. The PMI uses label information from the input samples to maintain the hit matrix that is used only at the test time for accuracy calculation; as a result, the CSOM is unsupervised in training even though its test-time performance is evaluated through a supervised lens. 

Note that we measure the average accuracy of SOM models based on whether the predicted SOM/CSOM unit's label matches the input sample's expected label. Unlike the L2 distance with running variance used at training time, we employed the cosine similarity at test time in order to measure the BMU for a test input sample and to determine its predicted label. Using this method, we calculated distinct accuracy measures from test samples for each task and stored these in what is known in the continual learning literature as the accuracy task matrix (of shape $T \times T$ for $T$ tasks). This helped us to calculate classical lifelong learning metrics such as average accuracy (ACC) \cite{FastandSlow}, backward transfer (BWT) \cite{GEM}, as well as the forgetting measure (FM) and learning accuracy (LA), both of which were mentioned in \cite{YinYL21}.  Formally, these evaluation metrics are defined as follows:
\begin{align}
    \text{ACC}\textcolor{green}{(\uparrow)} & = \frac{1}{T} \Sigma_{i=1}^{T} a_{i, T}   && \text{(higher is better)} \label{eq:acc} \\
    \text{BWT} \textcolor{blue}{(\downarrow)} & = \frac{1}{T-1} \Sigma_{i=1}^{T-1} a_{T, i} - a_{i,i} && \text{(lower is better)}\label{eq:bwt} \\
    \text{FM} \textcolor{blue}{(\downarrow)} & = \frac{1}{T} \Sigma_{i=1}^{T} \lvert a_{i, T} - a_{i}^{*} \rvert && \text{(lower is better)} \label{eq:fm} \\
    \text{LA} \textcolor{green}{(\uparrow)} & = \frac{1}{T} \Sigma_{i=1}^T a_{i,i} && \text{(higher is better)}. \label{eq:la}
\end{align}
In the above equations, $a_{i,j}$ is the accuracy for task $i$ after training on task $j$. The BWT measure helps us quantify how training on a new task $\mathcal{T}_k$ affects the accuracy of CSOM on the previously trained task $\mathcal{T}_{k-1}$. In other words, the BWT quantifies the amount of forgetting for a task as the training progresses incrementally for new tasks in time for any given algorithm.
FM is the measure of the difference between a model's final performance and its best performance ($a_i^{*}$) for a task. LA measures the average of the current task accuracy of CSOM.

\subsection{Simulation Results}
\label{sec:simulation_results}

In Table \ref{tab:results}, we present our full experimental results for the three grayscale datasets. In addition, we present quantitative benchmark measurements on the CIFAR-10 dataset in Table \ref{tab:cifar}, comparing to several prior, performant continual learning neural models.

\begin{table*}[!t]
    \centering
        \begin{subtable}{1\textwidth}
        \centering
        \begin{tabular}{rrrrr}
        \hline
        
        MNIST & ACC ($\uparrow$) & LA ($\uparrow$) & FM ($\downarrow$) & BWT ($\downarrow$) \\
        \hline
        vanilla SOM & 22.89 $\pm$ 0.02 & 46.4 $\pm$ 0.01 & 34.99 $\pm$ 0.01 & -26 $\pm$ 0.04 \\
        \hline
        DendSOM & 17.13 $\pm$ 0.01 & 40.34 $\pm$ 0.03 & 23.33 $\pm$ 0.03 & -25.78 $\pm$ 0.04 \\
        \hline
        \textbf{CSOM} & \textbf{85.03 $\pm$ 4.28} & \textbf{92.26 $\pm$ 2.49} & \textbf{7.23 $\pm$ 2.42} & \textbf{-8.02 $\pm$ 2.7} \\
        \hline
        \end{tabular}
        
        \begin{tabular}{rrrrr}
        \hline
        FMNIST & ACC ($\uparrow$) & LA ($\uparrow$) & FM ($\downarrow$) & BWT ($\downarrow$) \\
        \hline
        vanilla SOM & 32.07 $\pm$ 0.03 & 57.52 $\pm$ 0.03 & 37.01 $\pm$ 0.03 & -28.27 $\pm$ 0.02 \\
        \hline
        DendSOM & 16.62 $\pm$ 0.03 & 27.36 $\pm$ 0.03 & \textbf{11.34 $\pm$ 0.03} & \textbf{-11.92 $\pm$ 0.03} \\
        \hline
        \textbf{CSOM} & \textbf{75.13 $\pm$ 2.68} & \textbf{88.23 $\pm$ 0.56} & 13.11 $\pm$ 2.93 & -14.56 $\pm$ 3.25 \\
        \hline
        \end{tabular}
        
        \begin{tabular}{rrrrr}
        \hline
        KMNIST & ACC ($\uparrow$) & LA ($\uparrow$) & FM ($\downarrow$) & BWT ($\downarrow$) \\
        \hline
        vanilla SOM & 23.65 $\pm$ 0.03 & 54 $\pm$ 0.05 & 35.28 $\pm$ 0.01 & -33.79 $\pm$ 0.02 \\
        \hline
        DendSOM & 10.95 $\pm$ 0.01 & 12.92 $\pm$ 0.01 & \textbf{2.02 $\pm$ 0} & \textbf{-2.18 $\pm$ 0.01} \\
        \hline
        \textbf{CSOM} &  \textbf{81.602 $\pm$ 2.82} & \textbf{88.61 $\pm$ 1.6} & 7.04 $\pm$ 1.54 & -7.78 $\pm$ 1.73 \\
        \hline
        \end{tabular}
        \caption{Unsupervised online (epochs = 1) class incremental training}
        \label{tab:class_incremental}
    \end{subtable}
    \bigskip
    \begin{subtable}{1\textwidth}
        \centering
        \begin{tabular}{rrrrr}
        \hline
        MNIST & ACC ($\uparrow$) & LA ($\uparrow$) & FM ($\downarrow$) & BWT ($\downarrow$)\\
        \hline
        vanilla SOM & 55.03 $\pm$ 0.01 & 80.07 $\pm$ 0.02 & 25.59 $\pm$ 0.02 & -30.71 $\pm$ 0.04 \\
        \hline
        DendSOM & 58.40 $\pm$ 0.03 & 61.37 $\pm$ 0.05 & 7.30 $\pm$ 0.02 & \textbf{-3.7 $\pm$ 0.04} \\
        \hline
        \textbf{CSOM} & \textbf{89.09 $\pm$ 3.32} & \textbf{93.02 $\pm$ 2.03} & \textbf{4.03 $\pm$ 1.71} & -4.92 $\pm$ 2.11 \\
        \hline
        \end{tabular}
        
        \begin{tabular}{rrrrr}
        \hline
        FMNIST & ACC ($\uparrow$) & LA ($\uparrow$) & FM ($\downarrow$) & BWT ($\downarrow$) \\
        \hline
        vanilla SOM & 79.36 $\pm$ 0.06 & 89.45 $\pm$ 0.01 & 12.83 $\pm$ 0.06 & -12.61 $\pm$ 0.06 \\
        \hline
        DendSOM & 50.11 $\pm$ 0 & 53.13 $\pm$ 0.03 & 3.98 $\pm$ 0.03 & -3.76 $\pm$ 0.04 \\
        \hline
        \textbf{CSOM} & \textbf{95.61 $\pm$ 0.92} & \textbf{97.14 $\pm$ 0.4} & \textbf{1.57 $\pm$ 0.89} & \textbf{-1.92 $\pm$ 1.12} \\
        \hline
        \end{tabular}
        \begin{tabular}{rrrrr}
        \hline
        KMNIST & ACC ($\uparrow$) & LA ($\uparrow$) & FM ($\downarrow$) & BWT ($\downarrow$) \\
        \hline
        vanilla SOM & 57.18 $\pm$ 0.01 & 78.91 $\pm$ 0.01 & 21.72 $\pm$ 0.01 & -27.16 $\pm$ 0.01 \\
        \hline
        DendSOM & 52.3 $\pm$ 0.02 & 54.55 $\pm$ 0.03 & \textbf{3.62 $\pm$ 0.01} & \textbf{-2.8 $\pm$ 0.02} \\
        \hline
        \textbf{CSOM} & \textbf{88.11 $\pm$ 1.93} & \textbf{91.97 $\pm$ 1.04} & 3.88 $\pm$ 1.04 & -4.83 $\pm$ 1.27 \\
        \hline
        \end{tabular}
        \caption{Unsupervised online (epochs = 1) domain incremental learning}
        \label{tab:domain_incremental}
    \end{subtable}
    \caption{Summary of (n=10 trials; mean and standard deviation of scores) results for class-incremental and domain incremental variant of vanilla SOM, DendSOM, and contSOM. For DendSOM, we created $n=4$ soms with unit size = 14 }
    \label{tab:results}
\end{table*}


\begin{table}[!t]
    \centering
    \begin{tabular}{lrrrr}
        \hline
        Cifar-10 Model & ACC ($\uparrow$) & LA ($\uparrow$) & FM ($\downarrow$) & BWT ($\downarrow$) \\
        \hline
        Finetune & 10 $\pm$ 0 & 37.49 $\pm$ 0 & 27.83 $\pm$ 0 & -30.54 $\pm$ 0 \\
        Replay & 14.84 $\pm$ 1.7E-15 & 87.86 $\pm$ 1.42E-14 & 73.64 $\pm$ 0 & -81.13 $\pm$ 0 \\
        LwF  & 9.87 $\pm$ 0.17 & 11.37 $\pm$ 1.04 & \textbf{1.56 $\pm$ 1.31} & \textbf{-1.66 $\pm$ 1.36} \\
        iCarl & 10 $\pm$ 0 & 90.65 $\pm$ 1.42E-14 & 80.65 $\pm$ 0 & -89.61 $\pm$ 1.42E-14 \\
        EWC & 10.11 $\pm$ 0 & 43.93 $\pm$ 7.1E-15 & 35.69 $\pm$ 0 & -37.57 $\pm$ 0 \\
        Memo & 11.42 $\pm$ 0.48 & 96.59 $\pm$ 2.22 & 85.17 $\pm$ 1.74 & -94.64 $\pm$ 1.94 \\
        Podnet & 12.55 $\pm$ 1.7E-15 & 77.47 $\pm$ 0 & 64.92 $\pm$ 0 & -72.13 $\pm$ 0 \\
        BiC & 12.86 $\pm$ 0.72 & \textbf{96.77 $\pm$ 0.99} & 83.91 $\pm$ 1 & -93.24 $\pm$ 1.11 \\
        SCALE & 29.54 $\pm$ 0.6 & 47.31 $\pm$ 3.21 & 20.23 $\pm$ 2.11 & -19.75 $\pm$ 3.52 \\
        \textbf{CSOM} & \textbf{31.3} $\pm$ 0.3 & 46.14 $\pm$ 0.48 & 14.92 $\pm$ 0.24 & -16.49 $\pm$ 0.25 \\
        \hline
    \end{tabular}
    \caption{Summary (trials=10) of online (epochs=1) class incremental training on split-Cifar10}
    \label{tab:cifar}
\end{table}

\begin{table}[!hbt]
    \centering
    \begin{tabular}{lcccccc}
        \hline
        Papers & convolution & buffer & Data & single & class & task \\
          &   & storage & Augmentation & pass & labels & boundaries \\
        \hline
        Finetune & \textcolor{green}{\checkmark} & \textcolor{red}{\xmark} & \textcolor{red}{\xmark} & \textcolor{red}{\xmark} & \textcolor{green}{\checkmark} & \textcolor{red}{\xmark} \\
        Replay & \textcolor{green}{\checkmark} & \textcolor{green}{\checkmark} & \textcolor{red}{\xmark} & \textcolor{red}{\xmark} & \textcolor{green}{\checkmark} & \textcolor{green}{\checkmark} \\
        LwF \cite{8107520} & \textcolor{green}{\checkmark} & \textcolor{red}{\xmark} & \textcolor{red}{\xmark} & \textcolor{red}{\xmark} & \textcolor{green}{\checkmark} & \textcolor{green}{\checkmark} \\
        iCarl \cite{8100070} & \textcolor{green}{\checkmark} & \textcolor{green}{\checkmark} & \textcolor{red}{\xmark} & \textcolor{red}{\xmark} & \textcolor{green}{\checkmark} & \textcolor{green}{\checkmark} \\
        EWC \cite{kirkpatrick2017overcoming} & \textcolor{red}{\xmark} & \textcolor{red}{\xmark} & \textcolor{red}{\xmark} & \textcolor{red}{\xmark} & \textcolor{green}{\checkmark} & \textcolor{green}{\checkmark} \\
        Memo \cite{zhou2023a} & \textcolor{green}{\checkmark} & \textcolor{green}{\checkmark} & \textcolor{red}{\xmark} & \textcolor{red}{\xmark} & \textcolor{green}{\checkmark} & \textcolor{green}{\checkmark} \\
        Podnet \cite{podnet} & \{\textcolor{green}{\checkmark}, \textcolor{red}{\xmark}\} & \textcolor{green}{\checkmark} & \textcolor{red}{\xmark} & \textcolor{red}{\xmark} & \textcolor{green}{\checkmark} & \textcolor{green}{\checkmark}\\
        BiC \cite{Wu_2019_CVPR} & \textcolor{green}{\checkmark} & \textcolor{green}{\checkmark} & \textcolor{red}{\xmark} & \textcolor{red}{\xmark} & \textcolor{green}{\checkmark} & \textcolor{green}{\checkmark} \\
        SCALE \cite{SCALE} & \textcolor{green}{\checkmark} & \textcolor{green}{\checkmark} & \textcolor{green}{\checkmark} & \textcolor{green}{\checkmark} & \textcolor{red}{\xmark} & \textcolor{red}{\xmark} \\
        CSOM & \textcolor{red}{\xmark} & \textcolor{red}{\xmark} & \textcolor{red}{\xmark} & \textcolor{green}{\checkmark} & \textcolor{red}{\xmark} & \textcolor{red}{\xmark} \\
        \hline
    \end{tabular}
    \caption{Training schemes followed by all approaches}
    \label{tab:benchmarks}
\end{table}\vspace{-2.5mm}

\paragraph{The Classical SOM: } As observed in Table \ref{tab:results}, the standard/classical SOM, or ``Vanilla SOM'' (which indicates that no special task-driven mechanism was integrated, thus meaning that we used the online model presented in Algorithm \ref{alg:kohonen_map}), 
observably forgets quite strongly across all three benchmarks/datasets. In short, our results empirically confirm that, indeed, the standard SOM forgets information despite the potential (for reduced neural cross-talk) offered by its competitive internal activities. 
We present visual samples of the memories acquired by our SOM model(s) in Figure \ref{fig:contSOM_results_incremental}, which qualitatively demonstrates the types of memories these specific models acquire during learning -- 
for datasets such as Split-MNIST, the SOM only has stored in its internal memories prototypes/templates that match the last few digits/tasks of the underlying task sequence, i.e., largely the digit ``8'' and ``9''.

\paragraph{The CSOM:} \label{results_csom} 
The conventional distance metrics, such as euclidean distance or cosine similarity, often identify one of the previously trained units as the BMU for the current incoming task samples due to the semantic overlap present among different task samples in datasets like MNIST (as explained in section \ref{ssec:datasets}). This means that the images from different classes in MNIST-like datasets appear to be quite similar due to the common black background and white foreground. This increases the probability of an already trained SOM unit to be selected as BMU for an input sample from new class instead of an untrained unit being selected as BMU. Thus, a vanilla SOM fails to use its full capacity and ultimately does not train on all the available units in a continual learning setting.
The distance function shown in the $\Call{Train}{ }$ subroutine of algorithm \ref{alg:cont_som}, divides the squared terms of the L2 distance by the running standard deviation of units obtained from $\mathbf{M}^\sigma$. This helps to preserve the trained units' memories (with respect to previously encountered classes) and successfully allocates the untrained units in the SOM to new incoming tasks. In other words, such a distance metric for identifying BMU helps to balance the stability and plasticity of the neural system. More importantly, it eliminates the need for a task vector (refer to Table \ref{tab:benchmarks}), which sets up context for the incoming tasks in most modern-day continual learning models. As a result, the CSOM can learn in a streaming manner without needing task boundaries which makes it a powerful, unsupervised lifelong learning system.

Experimentally, we noticed that the Gaussian function used to scale the synaptic weight updates in a neighbourhood 
was liable to cause leaky weight updates in the units that were very far away from the BMU in topology. This caused an explosion of weight update in the available untrained units, thus undesirably increasing the size of cluster for current incoming task. To control this behaviour, we enforced a hard bound over the synaptic weight update in the neighbourhood using the current $\sigma_u$ value of the BMU. For this we create a binary mask as shown in step \ref{alg:binary_mask} of $\Call{Update}{ }$ subroutine in Algorithm \ref{alg:cont_som}. 

In Table \ref{tab:cifar}, we present a comparison of model performance measurements on the split-CIFAR10 benchmark. We used PyCIL \cite{zhou2023pycil} to obtain all the benchmarks for standard continual learning models except SCALE \cite{SCALE}. For them, we kept the same setting of total memory size=$2000$ and memory per class=$20$. These were the standard parameter settings that \cite{zhou2023pycil} used to obtain their baseline results. We conducted $10$ experimental trials for all the benchmarks and, after every trial, we constructed a task matrix in order to calculate the ACC, LA, FM, and BWT values for any model. Among all the benchmarks, \cite{SCALE} had the exact same setting (shown in Table \ref{tab:benchmarks}) as ours and had state-of-the-art (SOTA) ACC values in a semi-supervised learning setting. Although it had marginally higher mean LA value, their standard deviation was almost $7$ times higher than our CSOM after $10$ trials. Except for EWC \cite{kirkpatrick2017overcoming}, all of the models used convolution and most of them used a storage buffer (memory). Nevertheless, our CSOM model yielded the best average accuracy among all others without maintaining any buffer storage or requiring convolution to obtain better performance, despite operating in an online setting (epochs=$1$, each input sample processed once).

As shown in Figure \ref{fig:contSOM_results_incremental}, 
as a result of the neural clusters formed, our model appeared to acquire a good latent representation of every input task. This means that there is a good amount of  variance among the trained units allocated for every task. However, despite our good experimental results, we could not obtain an equal number of trained units per class. Moreover, some neuronal units (especially those at the line of separability of clusters) have fuzzy representations. This may occur if a neuronal unit receives a weight update that corresponds to multiple different classes.

Nevertheless, as indicated in Table \ref{tab:results} and \ref{tab:cifar}, 
\textbf{the CSOM achieves the best performance compared to all of the other models/variants of SOM on all four benchmarks/datasets and successfully beats the SOTA model on split-CIFAR10 dataset}. This empirical result indicates that our CSOM framework, leveraging a competitive learning scheme, can yield the potentially best memory retention (or greatest reduction in forgetting) when processing classes incrementally from a data stream. The visualization of the samples synthesized for this particular model -- see Figure 
\ref{fig:contSOM_results_incremental} -- 
also corroborate this result qualitatively; the samples, in this case, look the clearest and the model's internal units seem to represent most of the individual classes/tasks (including those presented at the start of the task sequence).

\noindent
\textbf{Discussion:} The nature of the distance function and the hyperparameter decay method used in all the SOM models induces an inherent bias towards the initially encountered classes in the class incremental setting. Specifically, it assigns a larger cluster or more trained units in the SOM to initial classes compared to classes that appear later in the incremental order. Although the proposed CSOM produced the best results out of all of the model variants, further improvement is certainly possible in order to obtain clearer/crisper representations of input samples in a trained model final output of a trained SOM (this will be the subject of future work). By augmenting the SOM's internal dynamics with additional synaptic parameters that maintained the running variance for all of its neuronal units, we show that the CSOM can further be used as a useful generative model, facilitating a natural internal memory replay mechanism that could further aid in memory retention in more complex continual learning setups. Notably, this could be an important component to improve the performance of memory-augmented neural networks \cite{das1992using, graves2014neural, joulin2015inferring, mali2020neural, stogin2024provably}.

\section{Conclusions}
\label{sec:conclusion}

In this paper, we investigated catastrophic forgetting -- a phenomenon where a neural-based intelligent agent forgets the information/knowledge that it acquired on previous datasets/tasks whenever it starts processing a new dataset/task -- in the context of continual unsupervised learning utilizing the classical self-organizing map (SOM). Specifically, we designed an adaptation of the classical model to this setting and empirically found that it exhibited severe interference and low memory retention. In light of this, we proposed a novel generalization of the model -- the continual SOM (CSOM) -- which we theoretically and experimentally demonstrated, across several data benchmarks, that it exhibited vastly improved memory retention ability notably through the use of running variance and decay mechanisms, locally embedded into each neuronal unit. Future work will include an examination of the integration of the cross-task memory retention ability of our model into supervised and semi-supervised neural systems on additional, larger-scale data problems, as well as further improving the quality of the CSOM's internally acquired neural prototypes through additional extensions/generalizations of its local unit-variance parameters.

\bibliographystyle{acm}
\bibliography{ref}

\clearpage

\section{Appendix}

\textbf{Revisiting the hyperparameters defined:} \\
Table \ref{tab:symbols} indicates key notations, symbols and abbreviations used in this paper.

\begin{table}[!htb]
    \centering
    \begin{tabular}{ll}
        \hline
        Item & Explanation \\
        \hline
        $\mathcal{G} = (K \times L)$ & Network Topology (dimensions of the SOM) \\
          &  \textbf{NOTE:} $K = L$ for CSOM \\
        $D$ & unit/neuron size and size of individual input sample \\
        $\mathbf{M} \in \mathcal{R}^{D \times H}$ & synaptic matrix of SOM \\
        $u$ & Best Matching Unit (BMU) \\
        $v_j$ & a non-BMU at location $j$ in CSOM \\
        $V = \{v_j\ |\ 0 < j < K^2\} $ & all non-BMU units in CSOM \\
        $H = \{u\} \cup V$  & all units in the SOM \\
        $\sigma_0$ & initial radius for all neurons \\
        $\lambda_0$ & initial learning rate all neurons \\
        $\sigma_u$ and $\sigma_v$ & radius of BMU and radius of non-BMU neuron\\
        $\lambda_u$ and $\lambda_v$ & learning rate of BMU and learning rate of non-BMU neuron \\
        
        $\omega^2$ & running variance of a neuron\\
        $\mathbf{M}^{\omega^2} \in \mathcal{R}^{D \times H}$ & matrix of running variance of all neurons in SOM \\
        $\sigma_h = \{\sigma_u\} \cup \{\sigma_{v_j}\} \in \mathcal{R}^{1 \times H}$ & radius of all neurons for weight update \\
        $\lambda_h = \{\lambda_u\} \cup \{\lambda_{v_j}\} \in \mathcal{R}^{1 \times H}$ & learning rate for all neurons used in the weight update steps of CSOM \\
        $\lambda^0_{\omega}$ & initial scaling/update factor for updating the running variance of every unit \\
        $\lambda_{\omega}$ & adjusted scaling/update factor for updating the running variance of every unit \\
        $\tau_\sigma$ & time constant for radius \\
        $\tau_\lambda$ & time constant for learning rate \\
        $\mathrm{p}$ & patch size (DendSOM) \\
        $\mathrm{s}$ & stride length (DendSOM) \\
        \hline
    \end{tabular}
    \caption{Notations, Symbols, abbreviations used in this paper}
    \label{tab:symbols}
\end{table}


Note that the DendSOM benchmarks are our implementation of the model, as the original source code was not publicly available.

\subsection{Class Incremental Learning}
This section contains information about class incremental settings where each task size had exactly one class in it.

\subsubsection{MNIST}

\textbf{Hyperparameters:}
Table \ref{tab:mnist_class_hyperparameters} contains the empirically obtained initial hyperparameters for obtaining the best possible results as described in Table \ref{tab:class_incremental}.

\begin{table}[!htb]
    \centering
    \begin{tabular}{ccccccccccc}
        \hline
        Model & $H$ & $D$ & $\sigma_0$ & $\lambda_0$ & $\omega^2_0$ & $\lambda^0_{\omega}$ & $\tau_\sigma$ & $\tau_{\lambda}$ & $\mathrm{p}$ & $\mathrm{s}$ \\
        \hline
        Vanilla SOM & $15 \times 15$ & 28 $\times$ 28 & 0.6 & 0.07 & 1 & 0.9 & 8 & 45 & - & -\\
        DendSOM & 8 $\times$ 8 & 7 $\times$ [14 $\times$ 14] & 4 & 0.95 & 2 & 0.005 & - & - & 10 & 3\\
        CSOM & $15 \times 15$ & $28 \times 28$ & 1.5 & 0.07 & 0.5 & 0.9 & 8 & 45 & - & -\\
        \hline
    \end{tabular}
    \caption{Hyperparameters for Class Incremental setting on MNIST}
    \label{tab:mnist_class_hyperparameters}
\end{table}

\textbf{Task matrices:}
We performed 10 trials of CSOM trained on MNIST. Table \ref{tab:class_incremental_mnist} indicates the resultant accuracy values,

\begin{table}[htb]
    \centering
    \resizebox{\textwidth}{!}{%
    \begin{tabular}{rrrrrrrrrr}
    \hline
     100.0 $\pm$ 0.0 & 0.0 $\pm$ 0.0 & 0.0 $\pm$ 0.0 & 0.0 $\pm$ 0.0 & 0.0 $\pm$ 0.0 & 0.0 $\pm$ 0.0 & 0.0 $\pm$ 0.0 & 0.0 $\pm$ 0.0 & 0.0 $\pm$ 0.0 & 0.0 $\pm$ 0.0 \\
99.77 $\pm$ 0.12 & 99.89 $\pm$ 0.14 & 0.0 $\pm$ 0.0 & 0.0 $\pm$ 0.0 & 0.0 $\pm$ 0.0 & 0.0 $\pm$ 0.0 & 0.0 $\pm$ 0.0 & 0.0 $\pm$ 0.0 & 0.0 $\pm$ 0.0 & 0.0 $\pm$ 0.0 \\
99.17 $\pm$ 0.4 & 99.41 $\pm$ 0.31 & 96.91 $\pm$ 1.2 & 0.0 $\pm$ 0.0 & 0.0 $\pm$ 0.0 & 0.0 $\pm$ 0.0 & 0.0 $\pm$ 0.0 & 0.0 $\pm$ 0.0 & 0.0 $\pm$ 0.0 & 0.0 $\pm$ 0.0 \\
98.83 $\pm$ 0.54 & 99.23 $\pm$ 0.31 & 93.31 $\pm$ 2.98 & 96.01 $\pm$ 1.41 & 0.0 $\pm$ 0.0 & 0.0 $\pm$ 0.0 & 0.0 $\pm$ 0.0 & 0.0 $\pm$ 0.0 & 0.0 $\pm$ 0.0 & 0.0 $\pm$ 0.0 \\
98.78 $\pm$ 0.52 & 99.22 $\pm$ 0.32 & 92.26 $\pm$ 3.38 & 94.61 $\pm$ 1.8 & 96.34 $\pm$ 1.02 & 0.0 $\pm$ 0.0 & 0.0 $\pm$ 0.0 & 0.0 $\pm$ 0.0 & 0.0 $\pm$ 0.0 & 0.0 $\pm$ 0.0 \\
96.78 $\pm$ 3.13 & 99.07 $\pm$ 0.43 & 92.12 $\pm$ 3.39 & 84.99 $\pm$ 9.97 & 96.05 $\pm$ 1.02 & 88.98 $\pm$ 5.21 & 0.0 $\pm$ 0.0 & 0.0 $\pm$ 0.0 & 0.0 $\pm$ 0.0 & 0.0 $\pm$ 0.0 \\
95.96 $\pm$ 3.0 & 98.5 $\pm$ 1.22 & 89.5 $\pm$ 6.05 & 84.52 $\pm$ 10.0 & 95.39 $\pm$ 1.19 & 86.19 $\pm$ 6.37 & 94.8 $\pm$ 2.46 & 0.0 $\pm$ 0.0 & 0.0 $\pm$ 0.0 & 0.0 $\pm$ 0.0 \\
95.93 $\pm$ 2.99 & 98.5 $\pm$ 1.22 & 88.33 $\pm$ 6.72 & 83.39 $\pm$ 10.11 & 93.3 $\pm$ 2.96 & 85.32 $\pm$ 6.34 & 94.75 $\pm$ 2.48 & 92.54 $\pm$ 1.59 & 0.0 $\pm$ 0.0 & 0.0 $\pm$ 0.0 \\
95.93 $\pm$ 3.03 & 98.41 $\pm$ 1.19 & 87.09 $\pm$ 6.55 & 81.73 $\pm$ 9.47 & 93.05 $\pm$ 3.07 & 80.09 $\pm$ 9.43 & 93.6 $\pm$ 4.35 & 92.42 $\pm$ 1.6 & 84.72 $\pm$ 2.25 & 0.0 $\pm$ 0.0 \\
95.89 $\pm$ 3.02 & 98.41 $\pm$ 1.19 & 87.14 $\pm$ 6.55 & 81.39 $\pm$ 9.33 & 79.17 $\pm$ 8.47 & 79.53 $\pm$ 9.35 & 93.36 $\pm$ 4.28 & 80.64 $\pm$ 7.77 & 82.43 $\pm$ 3.17 & 72.37 $\pm$ 14.48 \\
    \hline
    \end{tabular}%
    }
    \caption{mean and standard deviation of accuracies in a task matrix}
    \label{tab:class_incremental_mnist}
\end{table}

\subsubsection{KMNIST}

\textbf{Hyperparameters:}
Table \ref{tab:kmnist_class_hyperparameters} shows the hyperparameters we used for obtaining the results in Table \ref{tab:class_incremental} on KMNIST dataset.

\begin{table}[htb]
    \centering
    \begin{tabular}{ccccccccccc}
        \hline
        Model & $H$ & $D$ & $\sigma_0$ & $\lambda_0$ & $\omega^2_0$ & $\lambda^0_{\omega}$ & $\tau_\sigma$ & $\tau_{\lambda}$ & $\mathrm{p}$ & $\mathrm{s}$ \\
        \hline
        Vanilla SOM & $20 \times 20$ & 28 $\times$ 28 & 0.6 & 0.07 & 1 & 0.9 & 8 & 45 & - & -\\
        DendSOM & 10 $\times$ 10 & 7 $\times$ [14 $\times$ 14] & 6 & 0.95 & 2 & 0.005 & - & - & 4 & 2\\
        CSOM & $35 \times 35$ & $28 \times 28$ & 1.5 & 0.07 & 0.5 & 0.9 & 8 & 45 & - & -\\
        \hline
    \end{tabular}
    \caption{Hyperparameters for Class Incremental setting on KMNIST}
    \label{tab:kmnist_class_hyperparameters}
\end{table}

\textbf{Task matrices:}
We performed 10 trials of CSOM trained on KMNIST. Table \ref{tab:class_incremental_kmnist} indicates the resultant accuracy values,

\begin{table}[htbp]
  \centering
  \resizebox{\textwidth}{!}{%
    \begin{tabular}{rrrrrrrrrr}
    \hline
  100.0 $\pm$ 0.0 & 0.0 $\pm$ 0.0 & 0.0 $\pm$ 0.0 & 0.0 $\pm$ 0.0 & 0.0 $\pm$ 0.0 & 0.0 $\pm$ 0.0 & 0.0 $\pm$ 0.0 & 0.0 $\pm$ 0.0 & 0.0 $\pm$ 0.0 & 0.0 $\pm$ 0.0 \\
98.08 $\pm$ 0.85 & 98.72 $\pm$ 0.71 & 0.0 $\pm$ 0.0 & 0.0 $\pm$ 0.0 & 0.0 $\pm$ 0.0 & 0.0 $\pm$ 0.0 & 0.0 $\pm$ 0.0 & 0.0 $\pm$ 0.0 & 0.0 $\pm$ 0.0 & 0.0 $\pm$ 0.0 \\
97.86 $\pm$ 0.71 & 88.0 $\pm$ 2.71 & 93.25 $\pm$ 1.14 & 0.0 $\pm$ 0.0 & 0.0 $\pm$ 0.0 & 0.0 $\pm$ 0.0 & 0.0 $\pm$ 0.0 & 0.0 $\pm$ 0.0 & 0.0 $\pm$ 0.0 & 0.0 $\pm$ 0.0 \\
97.44 $\pm$ 0.82 & 88.06 $\pm$ 2.81 & 87.59 $\pm$ 1.93 & 94.2 $\pm$ 1.67 & 0.0 $\pm$ 0.0 & 0.0 $\pm$ 0.0 & 0.0 $\pm$ 0.0 & 0.0 $\pm$ 0.0 & 0.0 $\pm$ 0.0 & 0.0 $\pm$ 0.0 \\
93.98 $\pm$ 1.22 & 86.21 $\pm$ 3.18 & 86.63 $\pm$ 1.83 & 93.53 $\pm$ 1.66 & 82.79 $\pm$ 1.95 & 0.0 $\pm$ 0.0 & 0.0 $\pm$ 0.0 & 0.0 $\pm$ 0.0 & 0.0 $\pm$ 0.0 & 0.0 $\pm$ 0.0 \\
89.3 $\pm$ 3.37 & 85.93 $\pm$ 3.2 & 84.0 $\pm$ 3.27 & 91.73 $\pm$ 2.03 & 81.15 $\pm$ 3.03 & 82.6 $\pm$ 3.57 & 0.0 $\pm$ 0.0 & 0.0 $\pm$ 0.0 & 0.0 $\pm$ 0.0 & 0.0 $\pm$ 0.0 \\
88.97 $\pm$ 3.34 & 83.55 $\pm$ 3.24 & 79.92 $\pm$ 3.71 & 91.27 $\pm$ 2.14 & 79.93 $\pm$ 3.2 & 80.48 $\pm$ 3.16 & 85.91 $\pm$ 3.51 & 0.0 $\pm$ 0.0 & 0.0 $\pm$ 0.0 & 0.0 $\pm$ 0.0 \\
86.32 $\pm$ 3.14 & 83.37 $\pm$ 3.24 & 79.42 $\pm$ 3.74 & 91.16 $\pm$ 2.19 & 79.43 $\pm$ 3.17 & 80.33 $\pm$ 3.13 & 85.68 $\pm$ 3.49 & 89.19 $\pm$ 1.55 & 0.0 $\pm$ 0.0 & 0.0 $\pm$ 0.0 \\
83.86 $\pm$ 3.45 & 81.31 $\pm$ 3.21 & 77.98 $\pm$ 3.72 & 90.45 $\pm$ 2.68 & 77.59 $\pm$ 3.58 & 79.34 $\pm$ 3.38 & 83.33 $\pm$ 4.9 & 87.31 $\pm$ 2.63 & 80.75 $\pm$ 4.35 & 0.0 $\pm$ 0.0 \\
83.75 $\pm$ 3.41 & 80.35 $\pm$ 3.18 & 77.21 $\pm$ 3.98 & 90.39 $\pm$ 2.72 & 75.72 $\pm$ 4.39 & 79.28 $\pm$ 3.41 & 83.25 $\pm$ 4.92 & 86.62 $\pm$ 2.2 & 80.79 $\pm$ 4.36 & 78.66 $\pm$ 4.79 \\
    \hline
    \end{tabular}%
    }
    \caption{mean and standard deviation of accuracies in a task matrix}
  \label{tab:class_incremental_kmnist}
\end{table}

\subsubsection{Fashion-MNIST}

\textbf{Hyperparameters:}
Table \ref{tab:fashion_class_hyperparameters} indicates the hyperparameters used for obtaining results in Table \ref{tab:class_incremental} on Fashion-MNIST dataset.

\begin{table}[htb]
    \centering
    \begin{tabular}{ccccccccccc}
        \hline
        Model & $H$ & $D$ & $\sigma_0$ & $\lambda_0$ & $\omega^2_0$ & $\lambda^0_{\omega}$ & $\tau_\sigma$ & $\tau_{\lambda}$ & $\mathrm{p}$ & $\mathrm{s}$ \\
        \hline
        Vanilla SOM & $20 \times 20$ & 28 $\times$ 28 & 0.6 & 0.07 & 1 & 0.9 & 8 & 45 & - & -\\
        DendSOM & 10 $\times$ 10 & 7 $\times$ [14 $\times$ 14] & 5 & 0.95 & 2 & 0.005 & - & - & 8 & 4\\
        CSOM & $25 \times 25$ & $28 \times 28$ & 1.5 & 0.07 & 0.5 & 0.9 & 8 & 45 & - & -\\
        \hline
    \end{tabular}
    \caption{Hyperparameters for Class Incremental setting on Fashion-MNIST}
    \label{tab:fashion_class_hyperparameters}
\end{table}

\begin{table}[!htb]
  \centering
  \resizebox{\textwidth}{!}{%
    \begin{tabular}{rrrrrrrrrr}
    \hline
    100.0 $\pm$ 0.0 & 0.0 $\pm$ 0.0 & 0.0 $\pm$ 0.0 & 0.0 $\pm$ 0.0 & 0.0 $\pm$ 0.0 & 0.0 $\pm$ 0.0 & 0.0 $\pm$ 0.0 & 0.0 $\pm$ 0.0 & 0.0 $\pm$ 0.0 & 0.0 $\pm$ 0.0 \\
97.39 $\pm$ 1.49 & 97.7 $\pm$ 0.59 & 0.0 $\pm$ 0.0 & 0.0 $\pm$ 0.0 & 0.0 $\pm$ 0.0 & 0.0 $\pm$ 0.0 & 0.0 $\pm$ 0.0 & 0.0 $\pm$ 0.0 & 0.0 $\pm$ 0.0 & 0.0 $\pm$ 0.0 \\
92.9 $\pm$ 1.91 & 96.72 $\pm$ 0.82 & 94.32 $\pm$ 0.71 & 0.0 $\pm$ 0.0 & 0.0 $\pm$ 0.0 & 0.0 $\pm$ 0.0 & 0.0 $\pm$ 0.0 & 0.0 $\pm$ 0.0 & 0.0 $\pm$ 0.0 & 0.0 $\pm$ 0.0 \\
85.83 $\pm$ 3.25 & 88.44 $\pm$ 6.73 & 93.67 $\pm$ 0.72 & 89.49 $\pm$ 3.6 & 0.0 $\pm$ 0.0 & 0.0 $\pm$ 0.0 & 0.0 $\pm$ 0.0 & 0.0 $\pm$ 0.0 & 0.0 $\pm$ 0.0 & 0.0 $\pm$ 0.0 \\
85.71 $\pm$ 2.99 & 88.34 $\pm$ 6.68 & 73.42 $\pm$ 4.09 & 83.11 $\pm$ 4.61 & 76.2 $\pm$ 3.7 & 0.0 $\pm$ 0.0 & 0.0 $\pm$ 0.0 & 0.0 $\pm$ 0.0 & 0.0 $\pm$ 0.0 & 0.0 $\pm$ 0.0 \\
85.64 $\pm$ 3.03 & 88.34 $\pm$ 6.68 & 73.49 $\pm$ 4.08 & 83.07 $\pm$ 4.58 & 75.93 $\pm$ 3.79 & 99.76 $\pm$ 0.11 & 0.0 $\pm$ 0.0 & 0.0 $\pm$ 0.0 & 0.0 $\pm$ 0.0 & 0.0 $\pm$ 0.0 \\
77.5 $\pm$ 3.31 & 88.34 $\pm$ 6.69 & 68.65 $\pm$ 3.46 & 82.88 $\pm$ 4.54 & 72.45 $\pm$ 3.56 & 99.6 $\pm$ 0.39 & 41.62 $\pm$ 2.74 & 0.0 $\pm$ 0.0 & 0.0 $\pm$ 0.0 & 0.0 $\pm$ 0.0 \\
77.51 $\pm$ 3.32 & 88.34 $\pm$ 6.69 & 68.65 $\pm$ 3.48 & 82.88 $\pm$ 4.54 & 72.45 $\pm$ 3.56 & 66.87 $\pm$ 13.25 & 41.55 $\pm$ 2.73 & 95.77 $\pm$ 3.74 & 0.0 $\pm$ 0.0 & 0.0 $\pm$ 0.0 \\
77.24 $\pm$ 3.24 & 88.33 $\pm$ 6.68 & 68.57 $\pm$ 3.48 & 82.62 $\pm$ 4.61 & 72.11 $\pm$ 3.42 & 66.83 $\pm$ 13.1 & 39.95 $\pm$ 2.75 & 90.12 $\pm$ 9.74 & 94.84 $\pm$ 0.94 & 0.0 $\pm$ 0.0 \\
77.23 $\pm$ 3.21 & 88.34 $\pm$ 6.66 & 68.53 $\pm$ 3.47 & 82.6 $\pm$ 4.59 & 72.08 $\pm$ 3.4 & 58.84 $\pm$ 14.3 & 39.87 $\pm$ 2.69 & 76.89 $\pm$ 11.12 & 94.28 $\pm$ 1.19 & 92.65 $\pm$ 2.89 \\
    \hline
    \end{tabular}%
    }
    \caption{mean and standard deviation of accuracies in a task matrix}
  \label{tab:class_incremental_fashion}
\end{table}

\subsubsection{CIFAR10}

\textbf{Hyperparameters:}
Table \ref{tab:cifar_class_hyperparameters} indicates the hyperparameters used to obtain results in Table \ref{tab:cifar} the on split-CIFAR10 dataset.

\begin{table}[!htb]
    \centering
    \begin{tabular}{ccccccccc}
        \hline
        Model & $H$ & $D$ & $\sigma_0$ & $\lambda_0$ & $\omega^2_0$ & $\lambda^0_{\omega}$ & $\tau_\sigma$ & $\tau_{\lambda}$ \\
        \hline
        CSOM & $100 \times 100$ & $32 \times 32$ & 1.5 & 0.2 & 0.6 & 0.9 & 6 & 45 \\
        \hline
    \end{tabular}
    \caption{Hyperparameters for CSOM for Class Incremental setting on CIFAR10}
    \label{tab:cifar_class_hyperparameters}
\end{table}

Table \ref{tab:pycil_hyperparameters} indicates hyperparameters set while implementing models from \cite{zhou2023pycil} in a Class Incremental Online Learning setting.

\begin{table}[!htb]
    \centering
    \begin{tabular}{cccccc}
        \hline
        Papers & Memory Size & Memory per Class & Model & epochs & Fixed Memory \\
        \hline
        Finetune, Replay, LwF,  & 2000 & 20 & Resnet32 & 1 & \textcolor{red}{\xmark} \\
        iCarl, EWC, Memo, Podnet, BiC &  &  &  &  &  \\
        \hline
    \end{tabular}
    \caption{Hyperparameters for benchmarks from PyCIL \cite{zhou2023pycil} on CIFAR10}
    \label{tab:pycil_hyperparameters}
\end{table}

\begin{table}[!htb]
    \centering
    \begin{tabular}{cccccc}
        \hline
        Paper & Model & Data setting & lr & epochs & memory size \\
        \hline
        SCALE & Resnet18 & \textit{seq} & 0.03 & 1 & 256 \\
        \hline
    \end{tabular}
    \caption{Hyperparameters for benchmarks from SCALE \cite{SCALE} using SGD optimizer}
    \label{tab:scale_hyperparameters}
\end{table}

Apart from the hyperparameters mentioned in Tables \ref{tab:pycil_hyperparameters} and \ref{tab:scale_hyperparameters}, the rest of the hyperparameters were unchanged in their publicly available code. We added an evaluation code for obtaining task matrices for all the models in Table \ref{tab:cifar}.

\textbf{Task matrices:}
We performed 8 trials of CSOM trained on CIFAR-10. Following resultant accuracies values.

\begin{table}[!htbp]
  \centering
  \resizebox{\textwidth}{!}{%
    \begin{tabular}{rrrrrrrrrr}
    \hline
    100.0 $\pm$ 0.0 & 0.0 $\pm$ 0.0 & 0.0 $\pm$ 0.0 & 0.0 $\pm$ 0.0 & 0.0 $\pm$ 0.0 & 0.0 $\pm$ 0.0 & 0.0 $\pm$ 0.0 & 0.0 $\pm$ 0.0 & 0.0 $\pm$ 0.0 & 0.0 $\pm$ 0.0 \\
84.2 $\pm$ 1.55 & 76.88 $\pm$ 1.22 & 0.0 $\pm$ 0.0 & 0.0 $\pm$ 0.0 & 0.0 $\pm$ 0.0 & 0.0 $\pm$ 0.0 & 0.0 $\pm$ 0.0 & 0.0 $\pm$ 0.0 & 0.0 $\pm$ 0.0 & 0.0 $\pm$ 0.0 \\
65.86 $\pm$ 1.37 & 73.09 $\pm$ 1.15 & 52.61 $\pm$ 2.5 & 0.0 $\pm$ 0.0 & 0.0 $\pm$ 0.0 & 0.0 $\pm$ 0.0 & 0.0 $\pm$ 0.0 & 0.0 $\pm$ 0.0 & 0.0 $\pm$ 0.0 & 0.0 $\pm$ 0.0 \\
62.0 $\pm$ 1.42 & 67.05 $\pm$ 1.07 & 41.47 $\pm$ 2.12 & 46.31 $\pm$ 1.57 & 0.0 $\pm$ 0.0 & 0.0 $\pm$ 0.0 & 0.0 $\pm$ 0.0 & 0.0 $\pm$ 0.0 & 0.0 $\pm$ 0.0 & 0.0 $\pm$ 0.0 \\
55.48 $\pm$ 1.41 & 65.26 $\pm$ 1.14 & 36.68 $\pm$ 1.92 & 43.8 $\pm$ 1.55 & 22.42 $\pm$ 1.16 & 0.0 $\pm$ 0.0 & 0.0 $\pm$ 0.0 & 0.0 $\pm$ 0.0 & 0.0 $\pm$ 0.0 & 0.0 $\pm$ 0.0 \\
53.9 $\pm$ 1.59 & 63.2 $\pm$ 0.91 & 32.28 $\pm$ 2.28 & 32.22 $\pm$ 1.68 & 22.2 $\pm$ 0.42 & 31.49 $\pm$ 1.44 & 0.0 $\pm$ 0.0 & 0.0 $\pm$ 0.0 & 0.0 $\pm$ 0.0 & 0.0 $\pm$ 0.0 \\
52.76 $\pm$ 1.34 & 61.82 $\pm$ 0.7 & 30.75 $\pm$ 2.13 & 29.7 $\pm$ 1.89 & 21.41 $\pm$ 0.83 & 31.01 $\pm$ 1.5 & 20.24 $\pm$ 1.21 & 0.0 $\pm$ 0.0 & 0.0 $\pm$ 0.0 & 0.0 $\pm$ 0.0 \\
50.96 $\pm$ 0.77 & 59.95 $\pm$ 1.06 & 29.02 $\pm$ 1.74 & 28.08 $\pm$ 1.74 & 19.91 $\pm$ 0.96 & 29.31 $\pm$ 1.66 & 19.79 $\pm$ 1.17 & 37.3 $\pm$ 1.09 & 0.0 $\pm$ 0.0 & 0.0 $\pm$ 0.0 \\
42.49 $\pm$ 1.07 & 53.13 $\pm$ 1.29 & 27.98 $\pm$ 1.74 & 27.06 $\pm$ 1.25 & 19.11 $\pm$ 1.05 & 28.64 $\pm$ 1.48 & 19.32 $\pm$ 1.05 & 37.14 $\pm$ 0.99 & 38.04 $\pm$ 1.61 & 0.0 $\pm$ 0.0 \\
41.2 $\pm$ 1.11 & 42.96 $\pm$ 1.1 & 27.71 $\pm$ 1.59 & 26.19 $\pm$ 1.15 & 18.74 $\pm$ 1.12 & 28.51 $\pm$ 1.18 & 18.88 $\pm$ 0.96 & 36.25 $\pm$ 1.04 & 36.36 $\pm$ 1.65 & 36.01 $\pm$ 0.9 \\
    \hline
    \end{tabular}%
    }
    \caption{mean and standard deviation of accuracies in a task matrix}
  \label{tab:class_incremental_cifar}
\end{table}

\subsection{Domain Incremental Learning}

For the domain incremental setting, we set number of tasks = 5, number of classes/task = 2

\subsubsection{MNIST}

\textbf{Hyperparameters:}
Table \ref{tab:mnist_domain_hyperparameters} contains the initial hyperparameters used for obtaining the results shown in Table \ref{tab:domain_incremental}.

\begin{table}[!htb]
    \centering
    \begin{tabular}{ccccccccccc}
        \hline
        Model & $H$ & $D$ & $\sigma_0$ & $\lambda_0$ & $\omega^2_h$ & $\lambda^0_{\omega}$ & $\tau_\sigma$ & $\tau_{\lambda}$ & $\mathrm{p}$ & $\mathrm{s}$ \\
        \hline
        Vanilla SOM & $15 \times 15$ & 28 $\times$ 28 & 0.6 & 0.07 & 1 & 0.9 & 8 & 45 & - & -\\
        DendSOM & 8 $\times$ 8 & 7 $\times$ [14 $\times$ 14] & 4 & 0.95 & 2 & 0.005 & - & - & 10 & 3\\
        CSOM & $15 \times 15$ & $28 \times 28$ & 1.5 & 0.07 & 0.5 & 0.9 & 8 & 45 & - & -\\
        \hline
    \end{tabular}
    \caption{Hyperparameters for Domain Incremental setting on MNIST}
    \label{tab:mnist_domain_hyperparameters}
\end{table}

\textbf{Task matrices:}
Table \ref{tab:domain_mnist} indicates the resultant accuracy values obtained after performing 10 trials.

\begin{table}[!htbp]
    \centering
    \begin{tabular}{rrrrr}
    \hline
    99.85 $\pm$ 0.04 & 45.59 $\pm$ 5.94 & 45.63 $\pm$ 3.38 & 68.99 $\pm$ 4.3 & 39.82 $\pm$ 5.51 \\
99.04 $\pm$ 1.04 & 93.57 $\pm$ 2.13 & 61.77 $\pm$ 3.29 & 73.77 $\pm$ 10.54 & 45.9 $\pm$ 4.1 \\
97.55 $\pm$ 1.65 & 93.25 $\pm$ 2.75 & 93.88 $\pm$ 2.96 & 55.48 $\pm$ 7.08 & 18.41 $\pm$ 3.85 \\
98.1 $\pm$ 1.65 & 92.6 $\pm$ 2.78 & 90.52 $\pm$ 5.21 & 96.94 $\pm$ 0.97 & 29.71 $\pm$ 4.53 \\
97.99 $\pm$ 1.65 & 91.85 $\pm$ 2.77 & 77.5 $\pm$ 9.88 & 97.2 $\pm$ 0.87 & 80.92 $\pm$ 6.1 \\
    \hline
    \end{tabular}
    \caption{mean and standard deviation of accuracies in a task matrix}
  \label{tab:domain_mnist}
\end{table}

\subsubsection{KMNIST}

\textbf{Hyperparameters:}
Table \ref{tab:kmnist_domain_hyperparameters} contains the initial hyperparameters used for obtaining the results shown in Table \ref{tab:domain_incremental}.

\begin{table}[!htb]
    \centering
    \begin{tabular}{ccccccccccc}
        \hline
        Model & $H$ & $D$ & $\sigma_0$ & $\lambda_0$ & $\omega^2_h$ & $\lambda^0_{\omega}$ & $\tau_\sigma$ & $\tau_{\lambda}$ & $\mathrm{p}$ & $\mathrm{s}$ \\
        \hline
        Vanilla SOM & $20 \times 20$ & 28 $\times$ 28 & 0.6 & 0.07 & 1 & 0.9 & 8 & 45 & - & -\\
        DendSOM & 12 $\times$ 12 & 7 $\times$ [14 $\times$ 14] & 6 & 0.95 & 2 & 0.005 & - & - & 4 & 2 \\
        CSOM & $35 \times 35$ & $28 \times 28$ & 1.5 & 0.07 & 0.5 & 0.9 & 8 & 45 & - & -\\
        \hline
    \end{tabular}
    \caption{Hyperparameters for Domain Incremental setting on KMNIST}
    \label{tab:kmnist_domain_hyperparameters}
\end{table}

\textbf{Task matrices:}
Table \ref{tab:domain_kmnist} indicates the resultant accuracy values obtained after performing 10 trials

\begin{table}[!tbp]
  \centering
  \begin{tabular}{rrrrr}
    \hline
    98.6 $\pm$ 0.4 & 29.35 $\pm$ 1.24 & 54.72 $\pm$ 2.57 & 28.63 $\pm$ 2.45 & 47.21 $\pm$ 2.47 \\
92.19 $\pm$ 2.46 & 91.48 $\pm$ 1.68 & 41.84 $\pm$ 3.3 & 55.72 $\pm$ 3.25 & 42.46 $\pm$ 2.05 \\
89.91 $\pm$ 2.75 & 89.74 $\pm$ 1.99 & 89.85 $\pm$ 2.09 & 52.24 $\pm$ 2.62 & 39.38 $\pm$ 1.62 \\
86.8 $\pm$ 2.5 & 89.55 $\pm$ 1.94 & 88.32 $\pm$ 2.21 & 93.01 $\pm$ 1.83 & 45.63 $\pm$ 1.43 \\
86.1 $\pm$ 2.69 & 88.89 $\pm$ 2.32 & 86.72 $\pm$ 3.45 & 91.91 $\pm$ 2.87 & 86.92 $\pm$ 2.3 \\
    \hline
  \end{tabular}
  \caption{mean and standard deviation of accuracies in a task matrix}
  \label{tab:domain_kmnist}
\end{table}

\subsubsection{Fashion-MNIST}

\textbf{Hyperparameters:}
Table \ref{tab:fashion_domain_hyperparameters} contains the initial hyperparameters used for obtaining the results shown in Table \ref{tab:domain_incremental}.

\begin{table}[!tb]
    \centering
    \begin{tabular}{ccccccccccc}
        \hline
        Model & $H$ & $D$ & $\sigma_0$ & $\lambda_0$ & $\omega^2_h$ & $\lambda^0_{\omega}$ & $\tau_\sigma$ & $\tau_{\lambda}$ & $\mathrm{p}$ & $\mathrm{s}$ \\
        \hline
        Vanilla SOM & $20 \times 20$ & 28 $\times$ 28 & 0.6 & 0.07 & 1 & 0.9 & 8 & 45 & - & -\\
        DendSOM & 10 $\times$ 10 & 7 $\times$ [14 $\times$ 14] & 5 & 0.95 & 2 & 0.005 & - & - & 8 & 4 \\
        CSOM & $25 \times 25$ & $28 \times 28$ & 1.5 & 0.07 & 0.5 & 0.9 & 8 & 45 & - & -\\
        \hline
    \end{tabular}
    \caption{Hyperparameters for Domain Incremental setting on Fashion-MNIST}
    \label{tab:fashion_domain_hyperparameters}
\end{table}

\textbf{Task matrices:}
Table \ref{tab:domain_fashion} indicates the resultant accuracy values obtained after performing 10 trials.

\begin{table}[!ht]
  \centering
    \begin{tabular}{rrrrr}
    \hline
     96.72 $\pm$ 1.44 & 53.15 $\pm$ 6.27 & 28.88 $\pm$ 4.84 & 40.7 $\pm$ 2.94 & 49.77 $\pm$ 0.27 \\
92.63 $\pm$ 1.79 & 95.08 $\pm$ 0.61 & 41.42 $\pm$ 1.21 & 44.46 $\pm$ 1.51 & 47.21 $\pm$ 2.26 \\
92.94 $\pm$ 1.72 & 92.22 $\pm$ 1.61 & 97.55 $\pm$ 0.87 & 95.75 $\pm$ 1.36 & 83.07 $\pm$ 3.94 \\
93.72 $\pm$ 1.71 & 91.37 $\pm$ 1.49 & 97.58 $\pm$ 0.92 & 97.22 $\pm$ 0.86 & 86.17 $\pm$ 4.71 \\
93.96 $\pm$ 1.73 & 90.77 $\pm$ 2.15 & 97.1 $\pm$ 0.85 & 97.06 $\pm$ 1.22 & 99.13 $\pm$ 0.13 \\
    \hline
    \end{tabular}
    \caption{mean and standard deviation of accuracies in a task matrix}
  \label{tab:domain_fashion}
\end{table}

\subsubsection{CIFAR-10}

\textbf{Hyperparameters:}
Table \ref{tab:cifar_domain_hyperparameters} shows hyperparameters used for training CSOM on CIFAR10 in a Domain Incremental Setting.

\begin{table}[!htb]
    \centering
    \begin{tabular}{ccccccccccc}
        \hline
        Model & $H$ & $D$ & $\sigma_0$ & $\lambda_0$ & $\omega^2_h$ & $\lambda^0_{\omega}$ & $\tau_\sigma$ & $\tau_{\lambda}$ \\
        \hline
        CSOM & $15 \times 15$ & $28 \times 28$ & 1.5 & 0.07 & 0.5 & 0.9 & 8 & 45 \\
        \hline
    \end{tabular}
    \caption{Hyperparameters for Domain Incremental setting on CIFAR10}
    \label{tab:cifar_domain_hyperparameters}
\end{table}

Table \ref{tab:domain_cifar} shows task matrix of accuracy values obtained after performing 8 trials of CSOM on grayscale images from CIFAR-10.

\begin{table}[!htbp]
  \centering
    \begin{tabular}{rrrrr}
    \hline
     99.52 $\pm$ 0.37 & 46.01 $\pm$ 4.66 & 43.63 $\pm$ 5.55 & 65.38 $\pm$ 5.11 & 38.67 $\pm$ 6.55 \\
98.79 $\pm$ 0.59 & 93.15 $\pm$ 1.09 & 60.14 $\pm$ 2.78 & 74.98 $\pm$ 7.41 & 50.55 $\pm$ 2.84 \\
97.15 $\pm$ 1.38 & 91.67 $\pm$ 1.84 & 94.41 $\pm$ 1.08 & 52.1 $\pm$ 7.59 & 18.99 $\pm$ 3.8 \\
95.53 $\pm$ 7.62 & 91.35 $\pm$ 1.74 & 89.13 $\pm$ 5.01 & 95.36 $\pm$ 0.72 & 30.34 $\pm$ 3.01 \\
97.25 $\pm$ 2.91 & 91.23 $\pm$ 0.92 & 68.91 $\pm$ 5.9 & 95.85 $\pm$ 0.77 & 79.57 $\pm$ 5.46 \\
    \hline
    \end{tabular}
    \caption{mean and standard deviation of accuracies in a task matrix}
  \label{tab:domain_cifar}
\end{table}

\end{document}